\newcommand{\setunder}[2]{\underset{#2}{#1}}
\def\argmax{\argmax@}
\def\argmax@{%
    \@ifnextchar{_}{\argmax@sub}{\text{\upshape\mdseries argmax} \,}
}%
\def\argmax@sub#1#2{\setunder{\text{\upshape\mdseries argmax}}{#2} \,}
\def\argmin{\argmin@}
\def\argmin@{%
    \@ifnextchar{_}{\argmin@sub}{\text{\upshape\mdseries argmin} \,}
}%
\def\argmin@sub#1#2{\setunder{\text{\upshape\mdseries argmin}}{#2} \,}
\newtheorem{theorem}{Theorem}
\theoremstyle{definition}
\newtheorem{definition}{Definition}
\newcommand{\Exp}{\mathds{E}}
\newcommand{\Prob}{\mathds{P}}
\begin{document}
\noindent \today
\nocite{*}
\begin{center}
  {\Large Advanced posterior analyses of hidden Markov models: \vspace{2mm} \\
  finite Markov chain imbedding and hybrid decoding} \vspace{3mm} \\
Zenia Elise Damgaard Bæk$^{*1}$, Moisès Coll Macià$^{*2}$, Laurits Skov$^{3}$, Asger Hobolth$^{\#4}$ 
\end{center}
\begin{description} 
  \item{$*$} Shared first authorship (authors contributed equally) 
  \item{$\#$} Corresponding author 
\end{description}
\begin{enumerate}
\item National Centre for Register-based Research, Department of Public Health, Aarhus University, Aarhus V, Denmark.
zenia.ncrr@au.dk. ORCID 0009-0008-6098-3076. 
\item Bioinformatics Research Centre, Aarhus University, Aarhus C, Denmark. moicoll@birc.au.dk. ORCID 0000-0002-7328-3553. 
\item Section for Evolutionary Genomics, Globe Institute, University of Copenhagen, Copenhagen, Denmark. laurits.skov@sund.ku.dk. ORCID 0000-0001-9582-0391. 
\item Department of Mathematics, Aarhus University, Aarhus C, Denmark. asger@math.au.dk. ORCID 0000-0003-4056-1286. 
\end{enumerate}
\section*{Abstract}
Two major tasks in applications of hidden Markov models are to (i) compute distributions of summary statistics of the hidden state sequence, and (ii) decode the hidden state sequence. We describe finite Markov chain imbedding (FMCI) and hybrid decoding to solve each of these two tasks. In the first part of our paper we use FMCI to compute posterior distributions of summary statistics such as the number of visits to a hidden state, the total time spent in a hidden state, the dwell time in a hidden state, and the longest run length. We use simulations from the hidden state sequence, conditional on the observed sequence, to establish the FMCI framework. In the second part of our paper we apply hybrid segmentation for improved decoding of a HMM. We demonstrate that hybrid decoding shows increased performance compared to Viterbi or Posterior decoding (often also referred to as global or local decoding), and we introduce a novel procedure for choosing the tuning parameter in the hybrid procedure. Furthermore, we provide an alternative derivation of the hybrid loss function based on weighted geometric means. We demonstrate and apply FMCI and hybrid decoding on various classical data sets, and supply accompanying code for reproducibility.
\vspace{2mm} \\
\noindent {\bf Key words:} Artemis analysis, decoding, finite Markov chain imbedding, hidden Markov model, hybrid decoding, pattern distributions. 
\section{Introduction}
Hidden Markov models (HMMs) constitute an enormously popular class of models, and are useful in many different contexts. The basic theory, various extensions and numerous applications of HMMs can be found in \cite{Zucchini2016}. The classical book by \cite{durbin1998biological} introduced and demonstrated the versatile framework in the field of genomics, and persists in being a major source of inspiration. Gene finding \cite{burge1997prediction, hobolth2005applications} is among the most prominent early applications. We refer to \cite{dutheil2017hidden} for a review on hidden Markov models in population genomics. A recent application of HMMs in the study of human evolution is the {\rm hmmix} software ({\verb+https://pypi.org/project/hmmix/+}), which identifies Neanderthal and Denisovan admixture fragments in modern humans \cite{SkovHuiShchurHobolthScallySchierupDurvin2018}. The framework is a two-state Poisson HMM that identifies archaic sequences in modern human genomes without requiring a reference archaic genome, assuming instead an outgroup devoid of archaic sequence. This application was the original motivation for the present work, and we apply all the methods explained here in an acoompanying paper \cite{CollMacia2025}. 

The standard statistical inference procedure for analysis of sequential data using HMMs often begins with parameter estimation using the Baum-Welch algorithm, which is a special case of the Expectation-Maximization algorithm (see e.g. Chapter~4 in \cite{Zucchini2016}). An alternative to the Baum-Welch algorithm is simple numerical maximization of the likelihood, and this procedure is advocated in \cite{turner2008direct}. Next, decoding is applied to uncover the hidden states (e.g. location of promotors, introns and exons in a genome) using either the Viterbi (global) or Posterior (local) decoding methods as explained in e.g. Chapter~5 in \cite{Zucchini2016}. The decoded regions are then analyzed to specify quantities such as the positions and proportion of exons in a genome, for example. 

Despite the utility of HMMs, the reliability of its results is constrained by the inherent limitations of both the Viterbi and Posterior decoding methods. The Viterbi algorithm (see e.g. Section~5.4.2 in \cite{Zucchini2016}), which identifies the globally most likely hidden state sequence , tends to be conservative and less prone to state transitions. In contrast, Posterior decoding (see e.g. Section~5.4.1 in \cite{Zucchini2016}) maximizes the probability of each individual state locally, often resulting in frequent and sometimes rather unlikely state transitions (in some special cases even suggesting state transitions with a probability of zero according to the model). This creates a severe dilemma in applications where the researcher must choose between the globally most likely hidden state sequence with low position-wise accuracy (Viterbi decoding) or a locally highly accurate solution with a globally low probability (Posterior decoding). 

Moreover, since both decoding methods provide a single solution for decoding (i.e., a single division of a genome into promotors, introns, exons and inter-genic regions), they provide no information about the uncertainty associated with these estimates. These simplistic summaries, combined with the methodological limitations of each approach, introduce biases in the estimation of summary statistics derived from the decoded sequence. Instead, a more fruitful procedure is to take all paths through the hidden states into account, and weight the paths according to their posterior probabilities given the model parameters and the observed sequence. 

To address these issues, we extend the standard statistical inference procedure for HMMs in two different directions. First, we improve the estimation of summary statistics by implementing the finite Markov chain imbedding (FMCI) for HMMs suggested by \cite{AstonMartin2007}. FMCI gives access to the distribution of patterns in the hidden state sequence. In our application the patterns are the number of jumps between any two different hidden states, the number of positions in each hidden state, the run length in a hidden state, and the maximum run length in a hidden state. Second, we improve the decoding of the hidden state sequence by applying the hybrid risk approach suggested originally by \cite{LemberKolydenko2014} and recently investigated by \cite{KuljusLember2023}. Hybrid decoding is the hidden state sequence that maximizes a weighted version of pointwise accuracy (local decoding) and posterior probability (global decoding). We suggest a novel analysis procedure, based on the so-called Artemis plot, to learn the right weight between the losses associated with global and local decoding. We demonstrate that by integrating these two advanced analysis methods (FMCI and hybrid decoding) into the standard HMM analyses we achieve more accurate insights into the posterior of the hidden state sequences, and obtain a more reliable decoding of hidden states. 

The remaining part of the paper is organized as follows. 
In Section~\ref{Sec:Motivation} we motivate the use of FMCI, Artemis analysis and hybrid decoding for improved statistical inference of HMMs. 
In Section~\ref{Sec:FMCI} we describe in detail the FMCI framework for finding the posterior distribution of summary statistics. In Section~\ref{Sec:HybridDecoding} we describe the hybrid method and carry out a large simulation study to investigate the Artemis Plot and situations where hybrid decoding is particularly useful. 
In this paper we mainly concentrate on the basic theory, classical illustrative data sets, and simple HMMs. In Section~\ref{Sec:Discussion} we discuss how FMCI and hybrid decoding can be extended to more complex pattern distributions, more general HMMs, and large-scale applications.   
Our results and figures can be reproduced from the code in the GitHub repository {\tt https://github.com/AsgerHobolth/PosteriorHMM/}.
\section{Motivation} \label{Sec:Motivation}
The purpose of this section is to motivate finite Markov chain imbedding (FMCI; Section~\ref{sec:FMCImotivate}) and hybrid decoding, and to introduce the Artemis Plot (Section~\ref{sec:ArtemisMotivate}).
\subsection{Finite Markov chain imbedding and fetal movement counts} \label{sec:FMCImotivate}
In Figure~\ref{FetalLambFig} we analyze the number of fetal lamb movements in five-second intervals. 
The data in Figure~\ref{FetalLambFig}a is from page~76-77 in~\cite{Zucchini2016}, and can also be found on page 123-124 in \cite{guttorp2018stochastic}. We analyze the data using a 2-state Poisson HMM. The original source is \cite{LerouxPuterman1992} who showed that a Poisson HMM is a plausible mechanism for the physiological process. 

The maximum likelihood estimates for the 2-state hidden Markov chain and Poisson emission probabilities are given by
\begin{equation*}
    \pi = (1, 0),
    \;\; 
    \Gamma =
    \begin{pmatrix}
        0.989 & 0.011 \\
        0.287 & 0.703
    \end{pmatrix},
    \;\;
    {\rm and} \;\;
    \lambda = (0.278,3.217).
\end{equation*}
\begin{figure}[!htb]
  \centering
  \includegraphics[scale=0.8]{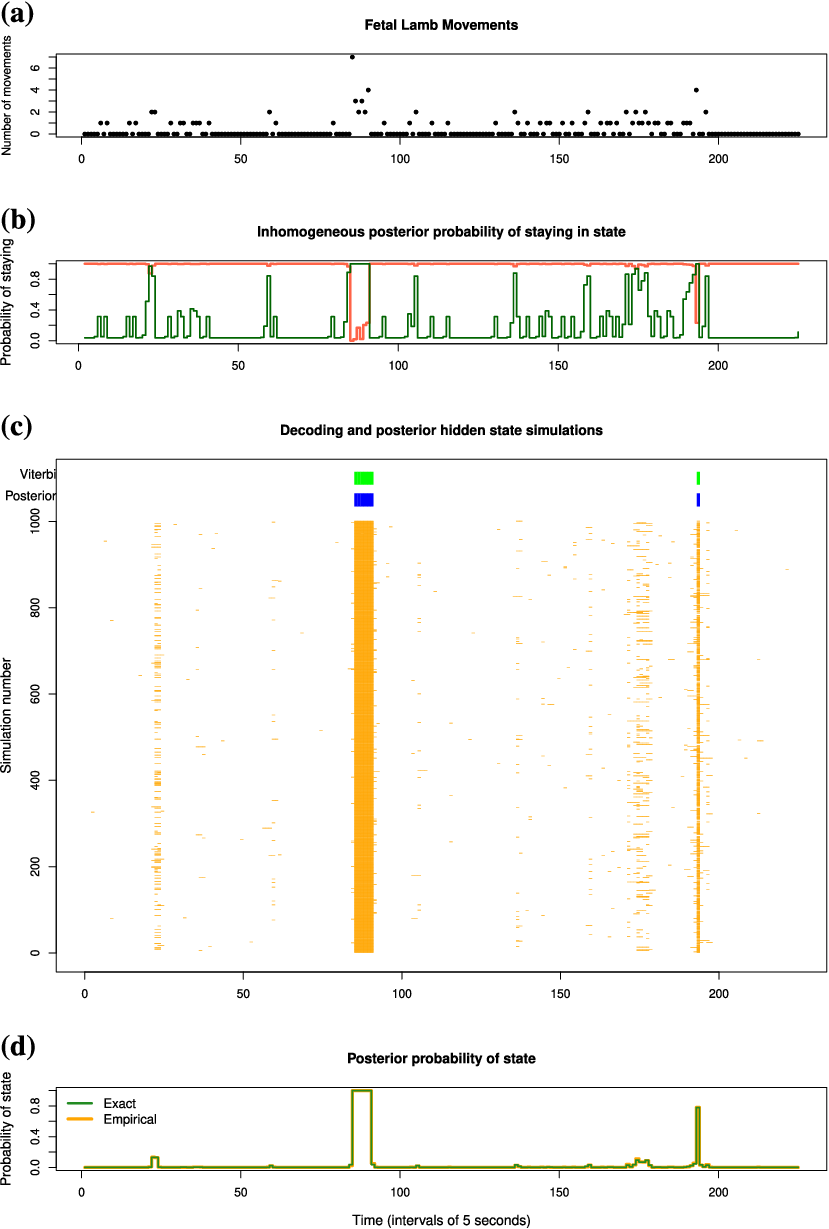}
  \caption{
  Illustration and analysis of fetal movement count data. {\bf (a)} The data consists of the number of fetal lamb movements in 5 second intervals. {\bf (b)} Illustration of Theorem~1 in Section~\ref{Sec:FMCI} of the manuscript. The red line is the probability for the hidden Markov chain of staying in state~1 at any time interval along the sequence given the observed sequence. The green line is the probability of staying in state~2 given the observed sequence. {\bf (c)} Top: Viterbi and Posterior decoding are identical for this data. Bottom: Samples of 1000 hidden state sequences from the conditional Markov chain illustrated in {\bf (b)}. The samples give a more nuanced and informative picture of the conditional hidden Markov chain than Viterbi or Posterior decoding. {\bf (d)} Pointwise empirical frequency of the 1000 samples is almost identical to the exact posterior probabilities found from the forward-backward tables.     
  } 
  \label{FetalLambFig}
\end{figure}

The crucial result for finite Markov chain imbedding to apply is the fact that the hidden Markov chain, conditional on the observed sequence, is an inhomogeneous Markov chain. This result is shown in Theorem~\ref{thm:inhomo} in Section~\ref{Sec:FMCI}. In Figure~\ref{FetalLambFig}b we show the inhomogeneous transition probabilities for staying in each of the two hidden states in each of the time intervals. The red line is the state with few movements (state~1) and the green line is the state with many movements (state~2). The two regions around interval number 85 and 195 stand out when considering the probability of staying in state~1. These two regions are also identified by Viterbi and Posterior decoding (with a threshold of 0.5) as belonging to state~2. Outside of these two regions, the probability for staying in state~1 is generally close to one, but there are more than 20 instances along the sequence where the probability of a transition from state~1 to state~2 is larger than 1\%. For example, around interval number 20 and interval number 175, it is rather likely to make a transition, and in this case the run in state~2 can easily span a few intervals as indicated in the green line in Figure~\ref{FetalLambFig}b. This property is also visible in the samples from the inhomogeneous hidden Markov chain in Figure~\ref{FetalLambFig}c. In Figure~\ref{FetalLambFig}d we show the empirical (based on the samples) and exact (based on the forward-backward tables; see e.g. Section~5.4.1 in \cite{Zucchini2016}) probability of being in state~1 in each time interval. The four regions around interval numbers 20, 85, 175 and 195 clearly stand out with a higher state~2 probability than the remaining intervals.

The samples can be used for more than summarizing the posterior probability of the hidden state sequence in each time interval. The yellow stair plots in  Figure~\ref{FetalLambFMCIFig} are empirical summary statistics for four different variables: the number of transitions from state~1 to state~2 (Figure~\ref{FetalLambFMCIFig}a), the number of intervals in state~2 (Figure~\ref{FetalLambFMCIFig}b), the expected number of runs of certain lengths (Figure~\ref{FetalLambFMCIFig}c), and the longest run (Figure~\ref{FetalLambFMCIFig}d). The green plots in Figure~\ref{FetalLambFMCIFig} are analytical calculations of the summary statistics. The distributions are calculated using finite Markov chain imbedding (FMCI), which is explained in detail in Section~\ref{Sec:FMCI}.    
\begin{figure}[!htb]
  \centering
  \includegraphics[scale=0.55]{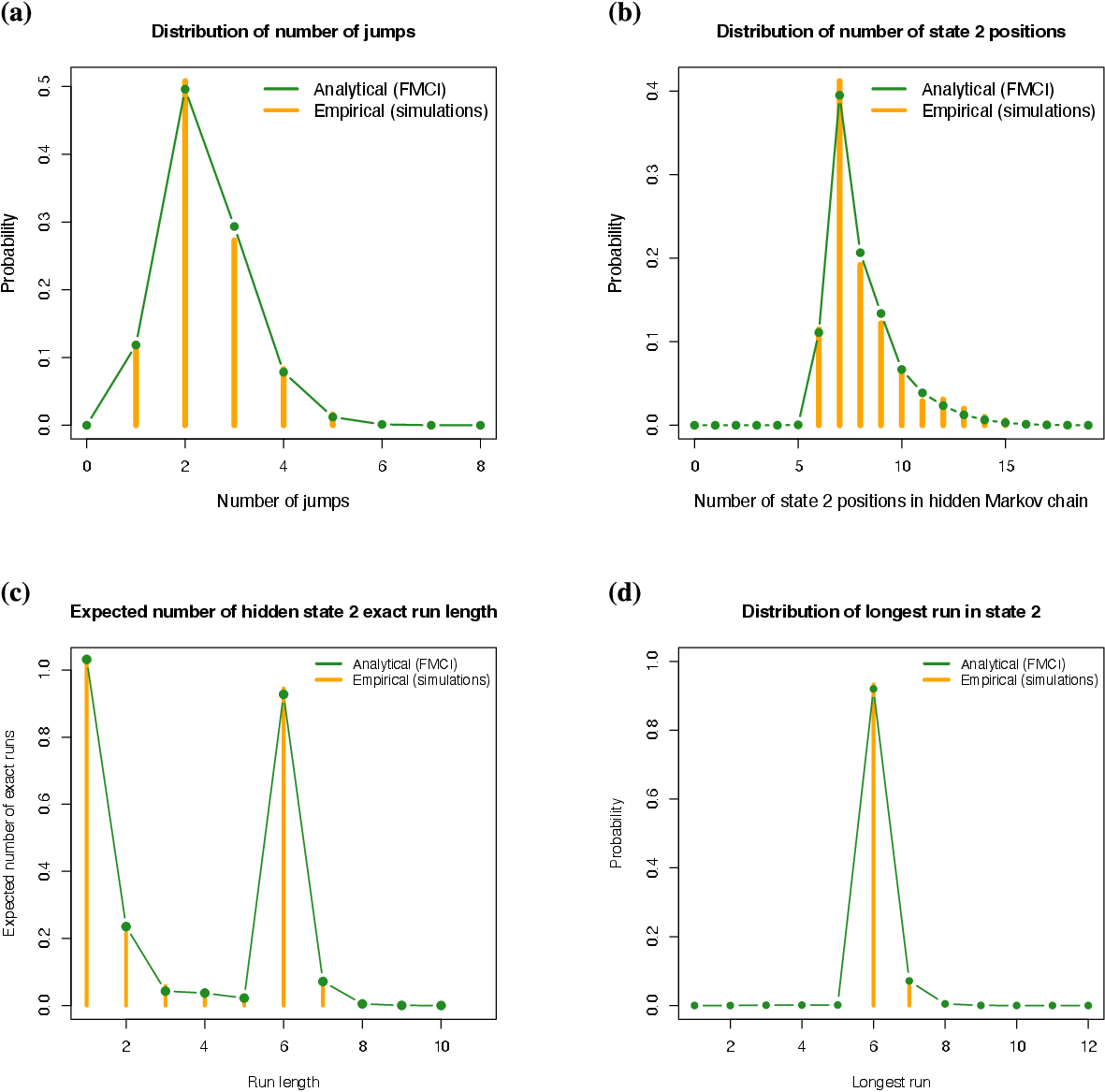}
  \caption{
  FMCI for fetal movement count data. Analytical plots are from FMCI and empirical plots are based on summary statistics from 1000 samples. {\bf (a)} Posterior distribution of the number of transitions from state~1 to state~2. {\bf (b)} Posterior distribution of the number of intervals in state~2. {\bf (c)} Expected number of run lengths in state~2. {\bf (d)} Posterior distribution of the longest run in state~2.      
  } 
  \label{FetalLambFMCIFig}
\end{figure}

We emphasize that the FMCI distributions are extremely useful in terms of summarizing the most important properties of the hidden Markov chain. We observe from Figure~\ref{FetalLambFMCIFig}a that the total number of runs in state~2 is between 1-5. Two runs are inferred from Viterbi and Posterior decoding, but this number of runs only amounts to 50\% of the posterior probability mass. Similarly, Viterbi and Posterior decoding infers 7 intervals in state~2, but the probability of more than 10 intervals in state~2 is larger than 15\% (Figure~\ref{FetalLambFMCIFig}b). Furthermore, Viterbi and Posterior decoding infers two regions in state~2. One region of length~1 and one region of length~6. FMCI is much more nuanced. From Figure~\ref{FetalLambFMCIFig}c we see that in general run lengths of size~1-7 are likely to occur. In general FMCI distributions are useful because they summarize the full posterior distribution of the hidden Markov chain.     
\subsection{Hybrid decoding and major yearly earthquakes, and the Artemis plot} \label{sec:ArtemisMotivate}
Hybrid decoding was originally formulated in \cite{LemberKolydenko2014}. Denote the hidden Markov chain by $y=(y_1,\ldots,y_n)$ and the observed sequence by $x=(x_1,\ldots,x_n)$. Posterior decoding minimizes the total number of expected pointwise errors by estimating which state is most likely for each position along the sequence 
\begin{eqnarray}
  {\rm Posterior}\;{\rm decoding:} \;\; s_t=\argmax_{u_t} \Prob(y_t=u_t|x), \;\; t=1,\ldots,n.
  \label{PosteriorDecoding}
\end{eqnarray}
Viterbi maximises the conditional probability of the hidden state sequence given the observed sequence 
\begin{eqnarray}
  {\rm Viterbi}\; {\rm decoding:} \;\; s=\argmax_u \Prob(y=u|x),
  \label{ViterbiDecoding}
\end{eqnarray}
where $u=(u_1,\ldots,u_n)$ is a hidden state sequence. Hybrid decoding is the sequence of hidden states $s=(s_1,\ldots,s_n)$ that maximizes the weighted and log-transformed loss functions from Posterior and Viterbi decoding. Mathematically we have
\begin{eqnarray}
  {\rm Hybrid}\;{\rm decoding:} \;\;
  s=
  \argmax_u 
  \Big\{
  (1-\alpha) \big[ \; \sum_{t=1}^{n} \log \Prob(y_t=u_t|x) \; \big]+
  \alpha \log \Prob(y=u|x) 
  \Big\},
  \label{HybridDecoding}
\end{eqnarray}
where $\alpha \in [0,1]$ is a tuning parameter. For $\alpha = 0$, hybrid decoding is equivalent to Posterior decoding, and for $\alpha = 1$, hybrid decoding is equivalent to Viterbi decoding. \cite{LemberKolydenko2014} and \cite{KuljusLember2023}) provide a dynamic programming algorithm, similar in spirit to the Viterbi algorithm, for identifying the hybrid path that maximizes the hybrid loss function 
\begin{eqnarray*}
  h(u)=
  (1-\alpha) \big[ \; \sum_{t=1}^{n} \log \Prob(y_t=u_t|x) \; \big]+
  \alpha \log \Prob(y=u|x) 
\end{eqnarray*}
as a function of $u=(u_1,\ldots,u_n)$. In Section~\ref{sec:geometric-mean} we discuss the hybrid loss function in more detail from the point of view of a weighted geometric mean. 

We note that hybrid decoding is trivial for the fetal movement count data from Section~\ref{sec:FMCImotivate}. In this case Viterbi and Posterior decoding are identical, and therefore hybrid decoding will also be identical to these two cases for any choice of tuning parameter~$\alpha$.   

Now consider the annual counts of major earthquakes in the years 1900 to 2006 from \cite{Zucchini2016} shown in Figure~\ref{fig:hybrid-earthquakes-simulated}A. In \cite{Zucchini2016}, the model parameters for a 2-state Poisson HMM are given by 
\begin{equation*}
    \pi = (1, 0),
    \;\; 
    \Gamma =
    \begin{pmatrix}
        0.928 & 0.072 \\
        0.119 & 0.881
    \end{pmatrix},
    \;\;
    {\rm and} \;\;
    \lambda = (15.4, 26.0).
\end{equation*}
When decoding the earthquakes data, we find that Viterbi and Posterior decoding differ in the years 1918 and 1973. When using hybrid decoding, we start by decoding the same path as Posterior decoding ($\alpha = 0$), which eventually becomes Viterbi ($\alpha = 1$) when increasing $\alpha$ (Figure~\ref{fig:hybrid-earthquakes-simulated}A). In this example, only one single unique hybrid path is available different from Posterior decoding and Viterbi: the hybrid path where $0.11\leq \alpha < 0.52$.

\begin{figure}[!htb]
    \centering
    \includegraphics[scale=0.4]{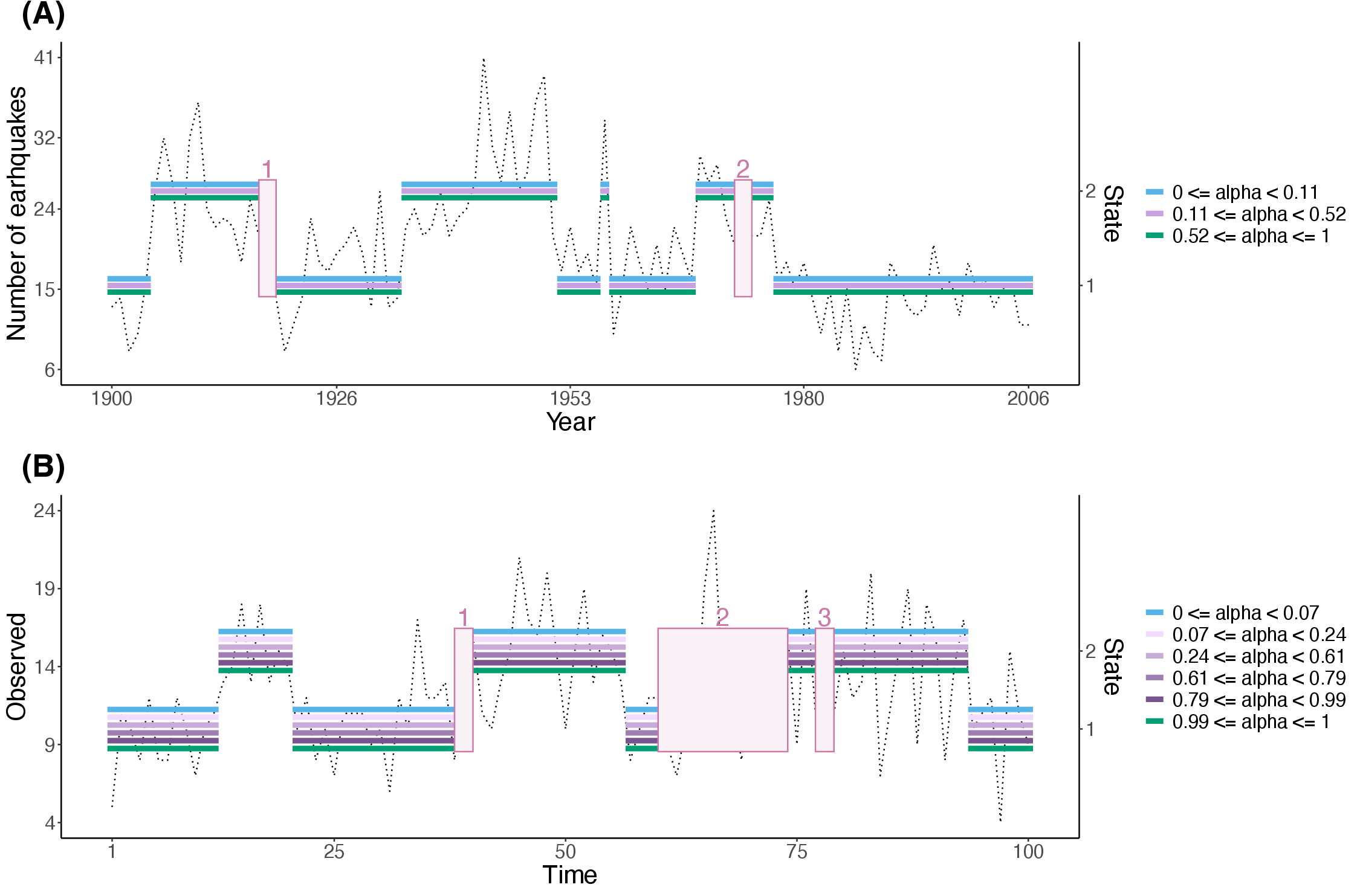}
    \caption{A visualization of how the hybrid paths change depending on~$\alpha$ for the earthquakes data~{\bf (A)} and a simulated example~{\bf (B)}. The dashed line is the observed sequence. {\bf (A)}: Posterior decoding and Viterbi differ in two years; 1918 and 1973 (pink box~1 and~2). For $0 \leq \alpha < 0.11$ the hybrid paths are identical to Posterior decoding. For $0.11 \leq \alpha < 0.52$ the hybrid paths are identical and different from both Posterior decoding and Viterbi, and for $0.52 \leq \alpha \leq 1$ the hybrid paths are identical to Viterbi. {\bf (B)}: Posterior decoding and Viterbi differ at three time points (pink boxes 1-3). For $0 \leq \alpha < 0.07$, the hybrid paths are identical to Posterior decoding. For $0.07 \leq \alpha < 0.99$, the hybrid paths differ from both Posterior decoding and Viterbi, and for $0.99 \leq \alpha \leq 1$, the hybrid paths are identical to Viterbi.}
    \label{fig:hybrid-earthquakes-simulated}
\end{figure}

In Figure~\ref{fig:hybrid-earthquakes-simulated}B we show an example where Posterior and Viterbi decoding differ in more positions. In this situation we simulated an observed sequence of length $n=100$ from a 2-state Poisson HMM given by
\begin{equation*}
    \pi = (1, 0), \;\;
    \Gamma =
    \begin{pmatrix}
        0.92 & 0.08 \\
        0.12 & 0.88
    \end{pmatrix},
    \;\; {\rm and} \;\;
    \lambda = (10, 15).
\end{equation*}
For this simulated example, there are four unique hybrid paths available between Posterior decoding and Viterbi, which differ in three regions (Figure~\ref{fig:hybrid-earthquakes-simulated}B). This example illustrates how different the Posterior decoding and Viterbi paths can be, as we see that Posterior decoding estimates short, but frequent, segments (pink boxes 2 and 3), whereas Viterbi is much more conservative and stays in the same state for a longer period of time. Importantly, we clearly see how the hybrid paths change their decoding to be more similar to Posterior decoding for values of $\alpha$ close to zero, and more similar to Viterbi for $\alpha$ close to one.

In large-scale applications with a large number of observations~$n$ it is impossible to visualize all the hybrid paths as in Figure~\ref{fig:hybrid-earthquakes-simulated}, and this is the background for our Artemis plot shown in Figure~\ref{fig:BasicArtemis}.
\begin{figure}[!ht]
    \centering
    \includegraphics[scale=0.5]{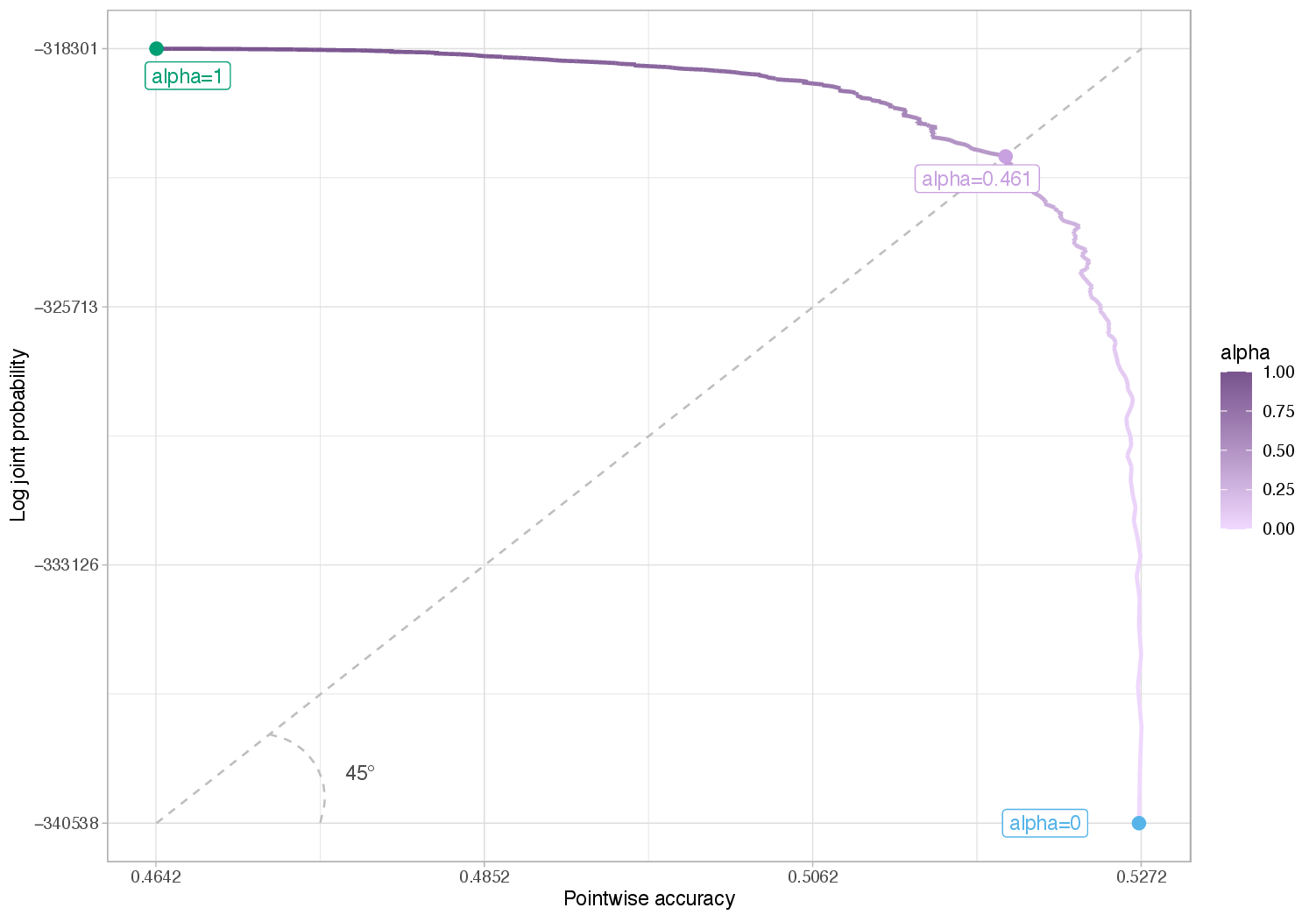}
    \caption{An illustration of an Artemis plot. On the x-axis is the pointwise accuracy, and on the y-axis is the log-joint probability. We start in the lower right corner with Posterior decoding ($\alpha = 0$), and increase $\alpha$ until we reach Viterbi ($\alpha = 1$). We choose the tuning parameter $\alpha$ as the value that intersects the bow-shaped curve at a 45 degree angle, as illustrated in the figure. Here $\alpha=0.461$ in the intersection point between the bow and the arrow.}
    \label{fig:BasicArtemis}
\end{figure}

The Artemis plot is constructed from a simulation study with a large sequence length of $n=10^5$. The model is a 3-state Poisson HMM with parameters
\begin{equation*}
    \pi = (0.8, 0.1, 0.1), \;\;
    \Gamma =
    \begin{pmatrix}
        0.8 & 0.1 & 0.1 \\
        0.1 & 0.8 & 0.1 \\
        0.1 & 0.1 & 0.8
    \end{pmatrix},
    \;\; {\rm and} \;\; 
    \lambda = (18, 20, 22).
\end{equation*}
Denote the simulated hidden state sequence by~$y$, the observed sequence by~$x$, and the decoded hidden state sequence by~$s$. Then pointwise accuracy is given by $\sum_{t=1}^n {\bf 1}(s_t=y_t)/n$ and the conditional probability is given by $\Prob(s|x)$. Instead of working with the conditional probablity we use the joint probability $\Prob(s,x)=\Prob(s|x)\Prob(x)$ because they only differ in terms of the multiplicative constant $\Prob(x)$ (the likelihood). In Figur~\ref{fig:BasicArtemis} we plot the joint values of the pointwise accuracy (x-axis) and log-joint probability (y-axis) for the hybrid paths with $\alpha$ varying from 0 (Posterior decoding) to 1 (Viterbi decoding). This is the Artemis plot.

Figure~\ref{fig:BasicArtemis} demonstrates that hybrid decoding is a useful construct because the joint values of accuracy and log-joint probability is bow-shaped as a function of the tuning parameter~$\alpha$. On the one hand, for small $\alpha$, we can achieve a decoding with an accuracy almost as high as for Posterior decoding ($\alpha=0$), but with a much higher conditional probability. On the other hand, for large $\alpha$, we can achieve a decoding with a conditional probability almost as high as for Viterbi decoding ($\alpha=1$), but with a much higher accuracy. In all of our simulations we have seen this bow-shaped curve for accuracy and conditional probability and varying $\alpha$ from~0 to~1. In general we suggest to choose the tuning parameter $\alpha$ in hybrid decoding by shooting out an arrow at a 45 degree angle, identify the point where it hits the bow, and report the corresponding value of $\alpha$, as illustrated in Figur~\ref{fig:BasicArtemis}. We call this figure an Artemis plot because Artemis in the ancient Greek religion always carried a bow and an arrow for hunting wild animals in the woods.    
\section{Summary statistics for the posterior hidden state sequence: Finite Markov chain imbedding (FMCI)} \label{Sec:FMCI}
In applications of hidden Markov models (HMMs), the main interest is often to compute the distribution of appropriate statistics. In this paper we focus on the the number of transitions between two hidden states, the time spent in a hidden state, the run length in a hidden state, and the longest hidden state run. In this section we review and apply the FMCI framework from \cite{AstonMartin2007} for obtaining pattern distributions of the posterior of the hidden state sequence for these four summary statistics. We also emphasize the role of sampling from the posterior and visualizations as important preliminary tools for setting up the FMCI framework with matrices of appropriate dimensions. This aspect of FMCI was only touched briefly in the discussion in \cite{AstonMartin2007}.   
\subsection{The posterior hidden state sequence is an inhomogeneous Markov chain}
\cite{AstonMartin2007} refers to \cite{Lindgren1978} for the result that the posterior hidden state sequence is an inhomogeneous Markov chain. For convenience we prove the result in this subsection. In Figure~\ref{Fig:iHMM} we provide a graphical proof of the Markov property. Note that the Markov property of the posterior hidden state sequence allows us to easily sample from the posterior distribution of the hidden state sequence (recall Figure~\ref{FetalLambFig}).
\begin{figure}[!htb]
  \centering
  \includegraphics[scale=0.7]{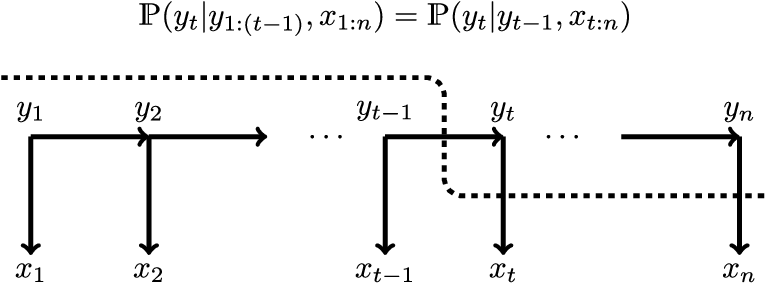}
  \caption{
  Graphical illustration of the result that the hidden state sequence 
  $(y_t)_{1\leq t \leq n}$, given the data $x=(x_t)_{1 \leq t \leq n}$, is an inhomogeneous Markov chain. The dashed line illustrates the variables that we condition upon in the left hand side in the equation.}
  \label{Fig:iHMM}  
\end{figure}

We let $\pi$ be the initial distribution, $\Gamma$ the transition probability matrix, and $\Phi$ the distribution of emissions for the HMM. The hidden state sequence is denoted $y=(y_1,\ldots,y_n)$ and the observed sequence is $x=(x_1,\ldots,x_n)$.
 
\begin{theorem}
    \label{thm:inhomo}
    In a hidden Markov model $(\pi,\Gamma,\Phi)$, the hidden state sequence $y=(y_1,\ldots,y_n)$ conditional on the observed sequence $x=(x_1,\ldots,x_n)$, is an inhomogeneous first-order Markov chain with transition probabilities
    \begin{eqnarray*}
       \Prob(y_t|y_{t-1},y_{t-2},\ldots ,y_1,x)
      = \Prob(y_t|y_{t-1},x_t,\ldots ,x_n) 
      = \frac{\beta_t(y_t)\Gamma_{y_{t-1},y_t}\Phi(x_t|y_t)}{\beta_{t-1}(y_{t-1})},
    \end{eqnarray*}
    for $t=2,\ldots,n$, where $\beta_t(y_t)=\Prob(x_{t+1},\ldots,x_n|y_t)$ is the matrix of backward probabilities. The conditional initial state probabilities are given by
    \begin{eqnarray}
        \Prob(y_1=i|x)=\frac{\beta_1(i)\pi_{i}\Phi(x_1|i)}{\Prob(x)}, \label{eq:Theorem1init}
    \end{eqnarray}
    where the denominator (the likelihood) is given by
    \begin{eqnarray*}
       \Prob(x)=\sum_{i} \beta_1(i) \pi_i \Phi(x_1|y_1=i).   
    \end{eqnarray*}
    Here, the last sum is over all possible hidden states.
\end{theorem}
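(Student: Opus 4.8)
The plan is to reduce everything to the elementary factorization of the joint law of the HMM,
$\Prob(y_{1:n},x_{1:n}) = \pi_{y_1}\Phi(x_1\mid y_1)\prod_{s=2}^{n}\Gamma_{y_{s-1},y_s}\Phi(x_s\mid y_s)$,
together with the definition of the backward probabilities $\beta_t(y_t)=\Prob(x_{t+1},\ldots,x_n\mid y_t)$ (with the convention $\beta_n\equiv 1$) and the backward recursion $\beta_{t-1}(i)=\sum_{j}\Gamma_{ij}\Phi(x_t\mid j)\beta_t(j)$, which itself is just the tower property applied to $\Prob(x_{t},\ldots,x_n\mid y_{t-1}=i)$ by conditioning on $y_t$. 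I would state these three facts explicitly at the outset so that the identifications made later are unambiguous.

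First I would fix $t\in\{2,\ldots,n\}$ and compute the marginal $\Prob(y_1,\ldots,y_t,x_{1:n})$ by summing the joint factorization over $y_{t+1},\ldots,y_n$. The only factors containing those variables are $\prod_{s=t+1}^{n}\Gamma_{y_{s-1},y_s}\Phi(x_s\mid y_s)$, and summing them over $y_{t+1},\ldots,y_n$ with $y_t$ held fixed collapses exactly to $\beta_t(y_t)$. This gives
$\Prob(y_{1:t},x_{1:n}) = \Big[\pi_{y_1}\Phi(x_1\mid y_1)\prod_{s=2}^{t-1}\Gamma_{y_{s-1},y_s}\Phi(x_s\mid y_s)\Big]\,\Gamma_{y_{t-1},y_t}\Phi(x_t\mid y_t)\,\beta_t(y_t)$.
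The bracketed prefactor depends on $y_1,\ldots,y_{t-1}$ but not on $y_t$, so when I form $\Prob(y_t\mid y_{1:t-1},x_{1:n})$ by dividing this by the sum of the same expression over all values $y_t'$, the prefactor cancels and I am left with
$\Prob(y_t\mid y_{1:t-1},x_{1:n})=\dfrac{\Gamma_{y_{t-1},y_t}\Phi(x_t\mid y_t)\beta_t(y_t)}{\sum_{y_t'}\Gamma_{y_{t-1},y_t'}\Phi(x_t\mid y_t')\beta_t(y_t')}$.
Recognizing the denominator as $\beta_{t-1}(y_{t-1})$ via the backward recursion yields precisely the asserted formula $\beta_t(y_t)\Gamma_{y_{t-1},y_t}\Phi(x_t\mid y_t)/\beta_{t-1}(y_{t-1})$.

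The Markov property then comes for free: the expression just derived depends on the hidden past only through $y_{t-1}$ and on the data only through $x_t,\ldots,x_n$ (namely $\Phi(x_t\mid y_t)$ through $x_t$, $\beta_t(y_t)$ through $x_{t+1},\ldots,x_n$, and $\beta_{t-1}(y_{t-1})$ through $x_t,\ldots,x_n$), which is exactly the claimed reduction of the conditioning set. Alternatively, and this is what Figure~\ref{Fig:iHMM} illustrates, one can obtain the same reduction directly by d-separation in the HMM DAG: any path from a node in $\{y_1,\ldots,y_{t-2}\}\cup\{x_1,\ldots,x_{t-1}\}$ to $y_t$ must pass through the chain node $y_{t-1}$, which lies in the conditioning set, hence is blocked. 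I would present the algebraic argument as the proof proper and mention the graphical one as intuition. For the initial distribution I would run the same marginalization at $t=1$: summing the joint over $y_2,\ldots,y_n$ gives $\Prob(y_1=i,x_{1:n})=\pi_i\Phi(x_1\mid i)\beta_1(i)$, and summing this over $i$ gives $\Prob(x)=\sum_i\pi_i\Phi(x_1\mid i)\beta_1(i)$; formula~\eqref{eq:Theorem1init} is then just Bayes' rule.

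I do not expect a real obstacle here; the only thing requiring care is the bookkeeping --- tracking which factors in the joint depend on $y_t$ and which form the cancelling prefactor, and correctly matching the tail sum with $\beta_t$ and the normalizing sum with $\beta_{t-1}$. The cleanest way to keep this straight is to carry the $\beta$-definition and the backward recursion as named facts from the start.
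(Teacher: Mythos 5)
Your proposal is correct, but it reaches the result by a different route than the paper. The paper's proof is a two-step argument: it first invokes the conditional-independence reduction $\Prob(y_t\mid y_{1:(t-1)},x)=\Prob(y_t\mid y_{t-1},x_{t:n})$, justified graphically (Figure~\ref{Fig:iHMM}), and then finishes with a short Bayes-type manipulation, using $\Prob(x_{t:n}\mid y_t,y_{t-1})=\Prob(x_{t:n}\mid y_t)=\Phi(x_t\mid y_t)\beta_t(y_t)$ and identifying the denominator as $\beta_{t-1}(y_{t-1})=\Prob(x_{t:n}\mid y_{t-1})$ directly from the definition of the backward probabilities. You instead work entirely from the joint factorization $\Prob(y_{1:n},x_{1:n})=\pi_{y_1}\Phi(x_1\mid y_1)\prod_{s=2}^n\Gamma_{y_{s-1},y_s}\Phi(x_s\mid y_s)$, marginalize the tail to produce $\beta_t(y_t)$, cancel the prefactor, and recognize the normalizer as $\beta_{t-1}(y_{t-1})$ via the backward recursion; the Markov property then drops out of the computation rather than being established separately, and the graphical argument becomes optional intuition (you correctly note it is the content of Figure~\ref{Fig:iHMM}). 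Your route is more self-contained — it does not presuppose the conditioning reduction and it simultaneously yields the initial-state formula \eqref{eq:Theorem1init} and the expression for $\Prob(x)$ by the same marginalization at $t=1$, which the paper states without explicit proof — at the cost of more bookkeeping; the paper's route is shorter once the conditional independence is granted. One small point worth making explicit in a final write-up: showing that $\Prob(y_t\mid y_{1:(t-1)},x)$ is a function of $(y_{t-1},x_{t:n})$ only gives the middle equality $\Prob(y_t\mid y_{t-1},x_{t:n})$ after a one-line tower-property (averaging) argument, since a conditional probability that depends only on a coarser conditioning set coincides with the conditional probability given that coarser set.
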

\begin{proof}
    The result follows from a direct calculation.
    We have
    \begin{eqnarray*}
      \Prob(y_t|y_{1:(t-1)},x) 
      &=&
      \Prob(y_t|y_{t-1},x_{t:n}) 
      =
      \frac{\Prob(y_t,y_{t-1},x_{t:n})}
      {\Prob(y_{t-1},x_{t:n})} 
      =
      \frac{\Prob(x_{t:n}|y_t,y_{t-1})\Prob(y_t,y_{t-1})}
      {\Prob(x_{t:n}|y_{t-1})\Prob(y_{t-1})} \\
      &=&
      \frac{\beta_t(y_t)\Phi(x_t|y_t)\Gamma_{y_{t-1},y_t}}
      {\beta_{t-1}(y_{t-1})}.
    \end{eqnarray*}
    The first equality is shown in Figure~\ref{Fig:iHMM} and in the last equality we used
    $\Prob(y_t,y_{t-1})/\Prob(y_{t-1})=\Gamma_{y_{t-1},y_t}$ and 
    \begin{eqnarray*}
      \Prob(x_{t:n}|y_t,y_{t-1}) = \Prob(x_{t:n}|y_t)=
      \Prob(x_t|y_t)\Prob(x_{(t+1):n}|y_t)=
      \Prob(x_t|y_t)\beta_t(y_t)=
      \Phi(x_t|y_t) \beta_t(y_t).
    \end{eqnarray*}
\end{proof}   
\noindent A huge advantage of Theorem~\ref{thm:inhomo} is that we can sample from the posterior hidden state sequence and subsequently use the samples to summarize various properties of the posterior for the hidden state sequence. This is exactly what we did in Section~\ref{sec:FMCImotivate} for the fetal lamb movements.

In the next subsection we describe how to avoid sampling by taking advantage of so-called finite Markov chain imbedding (FMCI). We emphasize, however, that the dimension of the transition matrices in FMCI are often informed from the empirical distributions from the samples.
\subsection{Summary statistics for inhomogeneous Markov chains: \\
Aston-Martin framework for the posterior hidden state sequence} 
The Aston-Martin framework apply the FMCI methodology originally developed in \cite{FuKoutras1994} to calculate distributions of patterns associated with the Bernoulli trials. Here, we illustrate FMCI using a HMM with two hidden states. We consider four types of summary statistics for the hidden states: (i) The number of jumps $J$ from state~1 to state~2, (ii) The number of positions $N$ in state~2, (iii) The exact run length $E_k$ of size $k$ in state~2, and (iv) The longest run $L$ of in state~2. 
\subsubsection{Number of jumps $J$}
Suppose we are interested in the number of transitions from state~1 to state~2. Theorem \ref{thm:inhomo} gives us, at any time point $t$, the probability~$a_t$ of staying in state~1 and the probability~$b_t$ of staying in state~2. Now consider the FMCI of the situation depicted in Figure~\ref{JumpsFMCIFig}.   
\begin{figure}[!htb]
  \centering
  \includegraphics[scale=0.45]{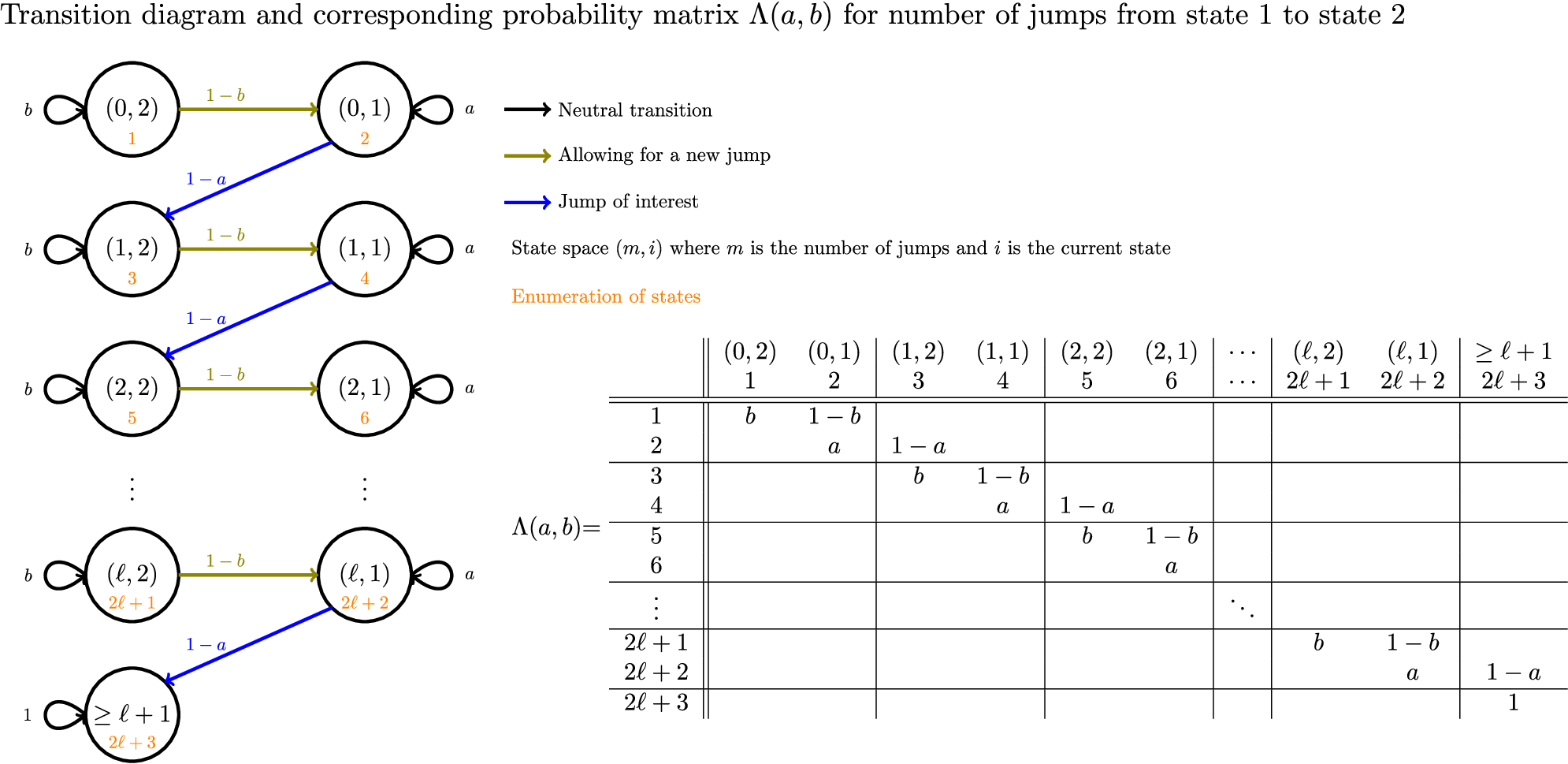}
  \caption{
  Illustration of finite Markov chain imbedding for the number of jumps from state~1 to state~2. The parameters $a$ and $b$ are the probabilities of staying in state~1  and state~2, respectively, at time~$t$, i.e. 
  $a_t=\Prob(y_t=1|y_{t-1}=1,x)$ and $b_t=\Prob(y_t=2|y_{t-1}=2,x),$ for $t=2,\ldots,n$. The index for the time~$t$ is suppressed in the figure.
  } 
  \label{JumpsFMCIFig}
\end{figure}

Each state in Figure~\ref{JumpsFMCIFig} has two indices. The first index is the number of $1\rightarrow 2$ transitions up to this point and the second index is the current state. The transition probabilities follow from the interpretation of $a$ and $b$ (here, the time $t$ is suppressed). For example, if the current state is $(2,2)$ it means that two $1\rightarrow 2$ transitions have already occured and the current state is state~2. Thus, the next state is $(2,2)$ with probability $b$ (staying in state~2) and $(2,1)$ with probability $1-b$ (transition from state~2 to state~1 and still two $1\rightarrow 2$ transitions have occured). If the current state is $(2,1)$ it means that two $1\rightarrow 2$ transitions have already occured, and the current state is state~1. Thus, the next state is $(2,1)$ with probability~$a$ (staying in state~1) and $(3,2)$ with probability $1-a$ (a third transition from state~1 to state~2 has occured, and the new state is state~2). We collect the transition probabilities at time~$t$ in the matrix $\Lambda(a_t,b_t)$, where the transition probability matrix is provided in Figure~\ref{JumpsFMCIFig}. The starting state in the FMCI process is $(0,2)$ with probability $\eta_2$ and $(0,1)$ with probability $\eta_1$, where $\eta_i=\Prob(y_1=i|x), \; i=1,2,$ are the initial state probabilities given in equation~(\ref{eq:Theorem1init}) in Theorem~\ref{thm:inhomo}.

Let $\eta=(\eta_2,\eta_1,0,\ldots,0)$ be the vector of length $2\ell+3$ of initial probabilities. The posterior distribution for the number of jumps is now determined by the vector 
\begin{eqnarray}
  \eta \prod_{t=2}^n \Lambda(a_t,b_t),
  \label{FMCIeq}
\end{eqnarray}
which is of length $2\ell+3$. Here, the sum of the two first entries is the probability of zero jumps, the sum of entries three and four is the probability of one jump, and so on. The last entry is the probability of $\ell+1$ or more jumps. It is natural to simulate from the inhomogeneous Markov chain and use summaries from the simulations to inform about the size $2\ell+3$ of the transition matrix. For the fetal lamb movement data in Section~\ref{sec:FMCImotivate} we choose $\ell=7$ because the empirical distribution in Figure~\ref{FetalLambFMCIFig}a shows that more than 8 transitions between state~1 and state~2 are unlikely for this data.

We end this subsection by emphasizing the block structure and sparsity of the transition matrix. Defining 
the block
\begin{eqnarray*}
  B(a,b)=
  \begin{pmatrix}
    \begin{tabular}{cc}
      $b$ & $1-b$ \\ 
      $0$ & $a$
    \end{tabular}
  \end{pmatrix}
\end{eqnarray*}
we that have $\Lambda=\Lambda(a,b)$ is given by
\begin{eqnarray*}
  \begin{tabular}{c||cccccccccc}
    & 1 & 2 & 3 & 4 & $\cdots$ & $2\ell+1$ & $2\ell+2$ & $2\ell+3$ \\ \hline \hline 
    1 & \multicolumn{2}{c|}{\multirow{2}{*}{$B(a,b)$}} & \multicolumn{2}{c|}{} && \\ 
    2 & \multicolumn{2}{c|}{} & $1-a$ & \multicolumn{1}{c|}{0}\\ \cline{1-5}
    3 & && \multicolumn{2}{|c|}{\multirow{2}{*}{$B(a,b)$}} \\ 
    4 & && \multicolumn{2}{|c|}{} &  \\ \cline{4-7} 
    $\vdots$ & &&&& \multicolumn{2}{|c|}{$\ddots$} \\ \cline{6-8}
    $2\ell+1$ && && && \multicolumn{2}{|c|}{\multirow{2}{*}{$B(a,b)$}} \\
    $2\ell+2$ && && && \multicolumn{2}{|c|}{} & $1-a$ \\ \cline{7-9}
    $2\ell+3$ && && && && \multicolumn{1}{|c}{1} 
  \end{tabular}.
\end{eqnarray*}
The other three summary statistics also exhibit block structure and sparsity. This feature greatly simplify implementation of FMCI.

If instead of the number of jumps from state~1 to state~2 we are interested in the number of runs in state~2, then we should use $\eta=(0,\eta_1,\eta_2,0,\ldots,0)$ as the initial probability vector because beginning in state~2 in the first position starts a run.
\subsubsection{Number of positions $N$}
In Figure~\ref{NFMCIFig} we show the transition matrix for the total number of positions in the state~2. The interpretation of the states in in the Markov imbedding is very similar to the number of jumps except that now we count every time we enter state~2. 
\begin{figure}[htb!]
  \centering
  \includegraphics[scale=0.5]{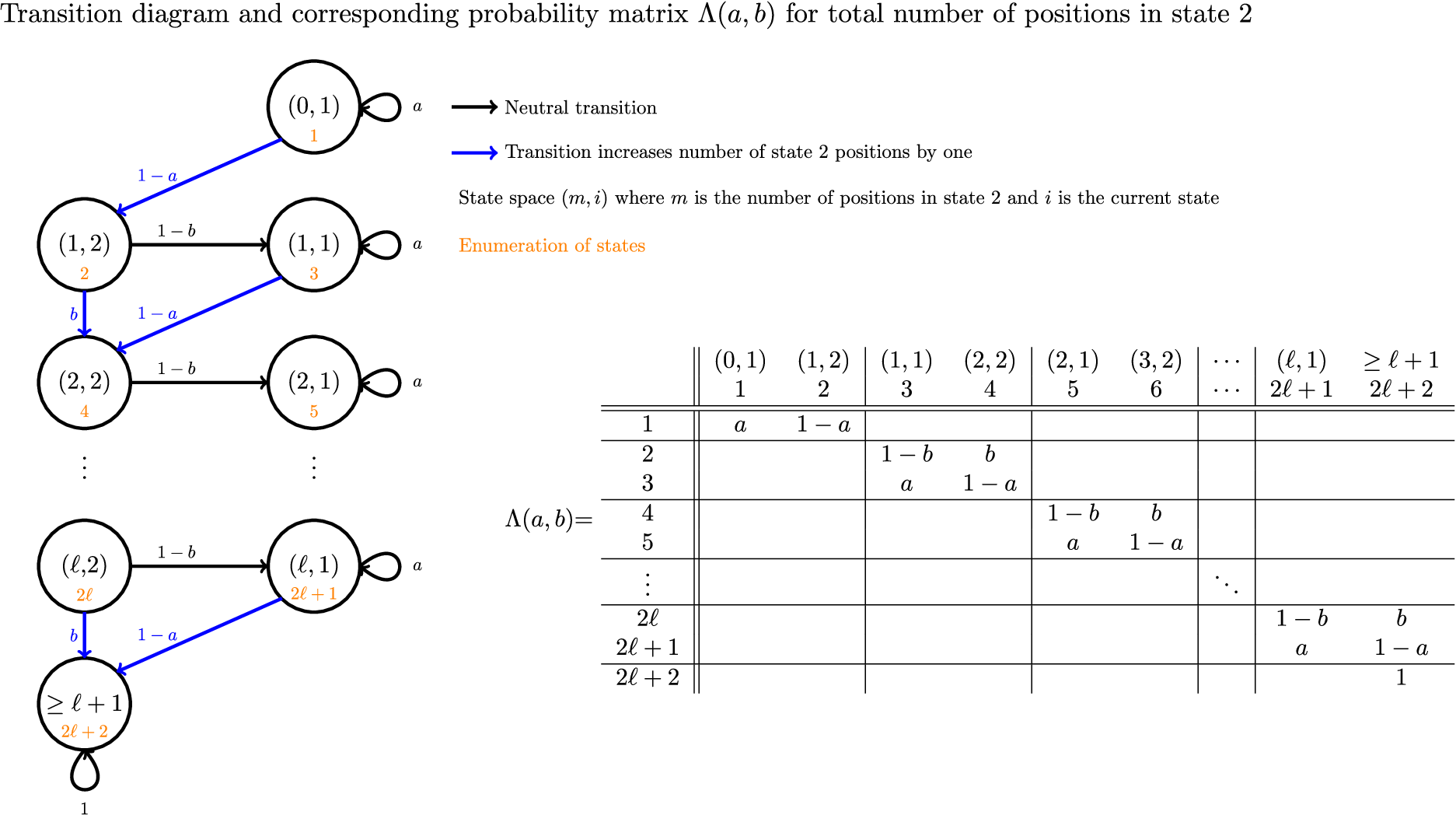}
  \caption{
  Illustration of finite Markov chain imbedding for the total number of positions in state~2. The parameters $a$ and $b$ are the probabilities of staying in state~1 and and state~2, respectively, at time~$t$, i.e. $a_t=\Prob(y_t=1|y_{t-1}=1,x)$ and $b_t=\Prob(y_t=2|y_{t-1}=2,x),\;\; t=2,\ldots,n$. The index for the time~$t$ is suppressed in the figure.
  } 
  \label{NFMCIFig}
\end{figure}

We define the block matrix by
\begin{eqnarray*}
  B=B(a,b)=
  \begin{pmatrix}
  \begin{tabular}{cc}
    $1-b$ & $a$ \\ 
    $a$ & $1-a$
  \end{tabular}
  \end{pmatrix},
\end{eqnarray*}
and the corresponding transition matrix $\Lambda=\Lambda(a,b)$ can be written in block format as
\begin{eqnarray*}
  \begin{tabular}{c||ccccccccc}
  & 1 & 2 & 3 & 4 & 5 & 6 & $\cdots$ & $2\ell+1$ & $2\ell+2$ \\ \hline \hline
  1 & $a$ & \multicolumn{1}{c|}{$1-a$} \\ \cline{1-5}
  2 &&& \multicolumn{2}{|c|}{\multirow{2}{*}{$B(a,b)$}}  \\ 
  3 &&& \multicolumn{2}{|c|}{} \\ \cline{1-1}\cline{4-7} 
  4 &&&&& \multicolumn{2}{|c|}{\multirow{2}{*}{$B(a,b)$}} \\ 
  5 &&&&& \multicolumn{2}{|c|}{}  \\ \cline{1-1}\cline{6-8} 
  $\vdots$ &&&&&&& \multicolumn{1}{|c|}{$\ddots$} \\ \cline{1-1}\cline{8-10}
  $2\ell$ &&&&&&&& \multicolumn{2}{|c|}{\multirow{2}{*}{$B(a,b)$}} \\
  $2\ell+1$ &&&&&&&& \multicolumn{2}{|c|}{}  \\ \cline{1-1}\cline{9-10}
  $2\ell+2$ &&&&&&&&& \multicolumn{1}{|c|}{1} 
  \end{tabular}
\end{eqnarray*}
The beginning state is $\eta=(\eta_1,\eta_2,0,\ldots,0)$. 

The posterior distribution for the number of positions in the state~2 can now again be determined by the product~(\ref{FMCIeq}) of the beginning vector and the the transition probablility matrices at each position in the Markov chain. Here, the first entry is the probability of zero positions in state~2 in the hidden Markov chain, the sum of entry two and three is the probability of one position in state~2, and so on. The last entry $2\ell+2$ is the probability of $\ell+1$ or more positions in state~2 in the hidden Markov chain. In Figure~\ref{FetalLambFMCIFig}b we show the distribution for the number of state~2 positions for the fetal lamb movement data. Again, the choice of the size of the transition probability matrix was informed by the simulation study.   
\subsubsection{Exact run length $E_k$}
In Figure~\ref{EFMCIFig} we show the transition matrix from finite Markov chain imbedding for determining the exact run length $E_k$ in state~2 for any run length $k \geq 1$.  
\begin{figure}[htb!]
  \centering
  \includegraphics[scale=0.41]{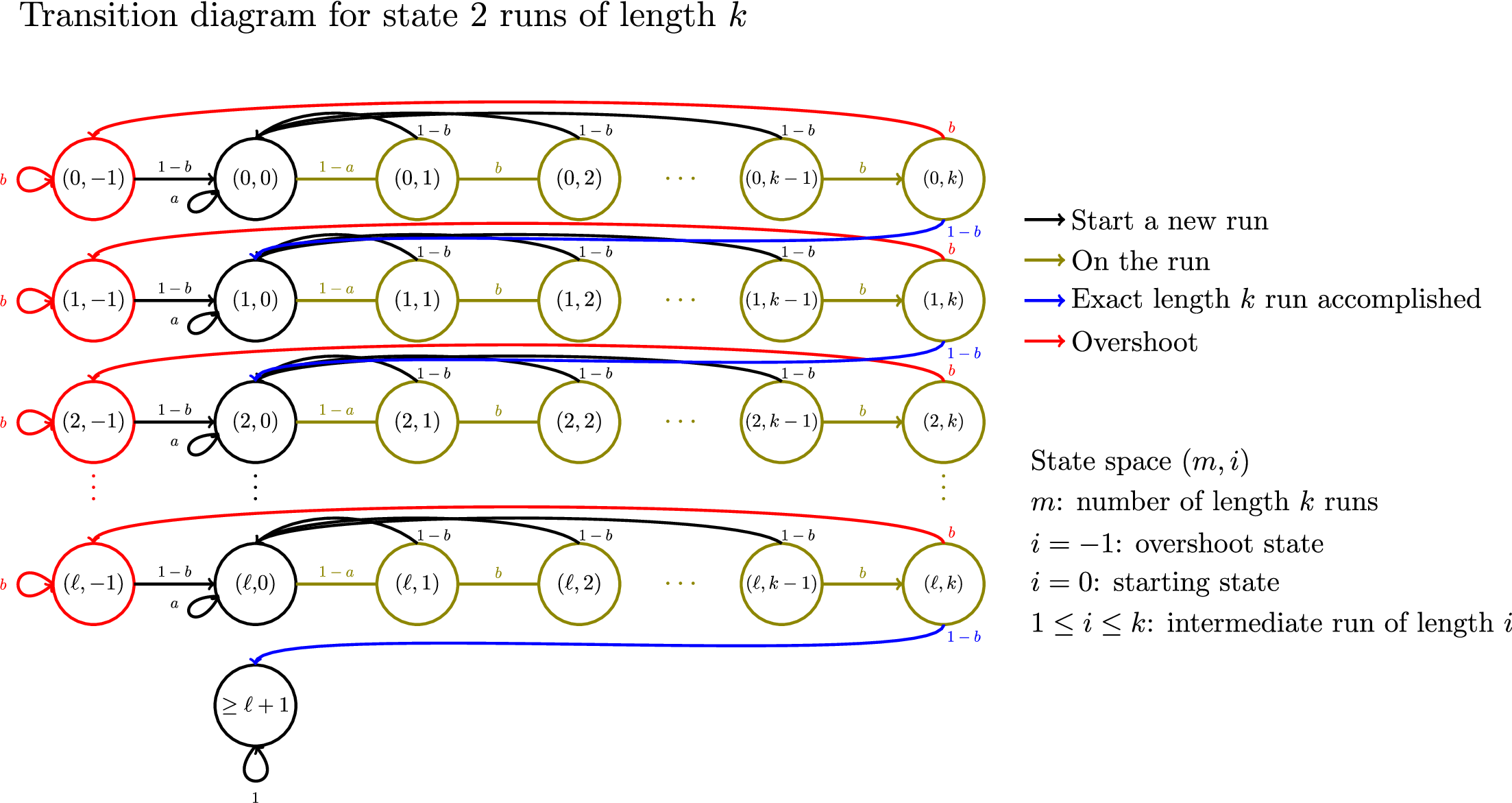}
  \caption{
  Illustration of finite Markov chain imbedding for exact run length $E_k$ of length $k$ in state~2. The parameters $a$ and $b$ are the probabilities of staying in state~1 and state~2, respectively, at time~$t$, i.e. $a_t=\Prob(y_t=1|y_{t-1}=1,x),$ and $b_t=\Prob(y_t=2|y_{t-1}=2,x),\;\; t=2,\ldots,n$. The index for the time~$t$ is suppressed in the figure.
  } 
  \label{EFMCIFig}
\end{figure}

From the figure we see a clear block structure. Indeed, we define the $(k+2)\times(k+2)$ block matrix
\begin{eqnarray*}
  B_k(a,b)=
  \begin{tabular}{c||cc|ccccccc}
  & 1 & 2 & 3 & 4 & $\cdots$ & $k+1$ & $k+2$ \\ \hline \hline
  1 & $b$ & $1-b$ \\ 
  2 & & $a$ & $1-a$ \\ \hline 
  3 && $1-b$ & 0 & $b$ \\   
  $\vdots$ && $\vdots$ &&& $\ddots$ & $\ddots$ \\ 
  $k+1$ && $1-b$ &&&& $0$ & $b$ \\
  $k+2$ & $b$& $0$ &&&&& $0$ 
\end{tabular}.
\end{eqnarray*}
The first state in the block is an overshoot state (the current run is larger than the desired length~$k$), the second state in the block erases all previous runs (the hidden Markov chain is in state~1), and in the third state in the block a new run of state~2's in the hidden Markov chain has started. The fate of this new run has three possibilities: it has a premature state~1 (and goes back to block state~2), it overshoots (and goes back to block state~1) or a run of exactly length $k$ is accomplished in which case the FMCI chain goes to the second state in the next block.
The transition probability matrix can be written in terms of the block matrices as
\begin{eqnarray*}
  \Lambda=\Lambda(a,b)=
  \begin{pmatrix}
  \begin{tabular}{ccccccccccccc}
  \multicolumn{3}{c|}{\multirow{3}{*}{$B_k(a,b)$}} & \multicolumn{3}{c}{} && \\
  \multicolumn{3}{c|}{} & \multicolumn{3}{c}{} \\
  \multicolumn{3}{c|}{} & $0$ & $1-b$ & \multicolumn{1}{c}{} \\ \cline{1-6}
  &&& \multicolumn{3}{|c|}{\multirow{3}{*}{$B_k(a,b)$}} \\ 
  &&& \multicolumn{3}{|c|}{} & \multicolumn{3}{c}{}  \\ 
  &&& \multicolumn{3}{|c|}{} & $0$ & $1-b$ & \multicolumn{1}{c}{} \\ \cline{4-9} 
  &&& &&& \multicolumn{3}{|c|}{$\ddots$} \\ \cline{7-12}
  &&& &&& &&& \multicolumn{3}{|c|}{\multirow{3}{*}{$B_k(a,b)$}} \\
  &&& &&& &&& \multicolumn{3}{|c|}{} \\
  &&& &&& &&& \multicolumn{3}{|c|}{} & $1-b$ \\ \cline{10-13}
  &&& &&& &&& &&& \multicolumn{1}{|c}{1} 
  \end{tabular}
  \end{pmatrix}.
\end{eqnarray*}
The posterior distribution for the number of runs in state~2 of a certain length $k$ can now be determined by the product~(\ref{FMCIeq}) where the beginning state is $\eta=(0,\eta_1,\eta_2,0,\ldots,0)$. The sum of the first $(k+2)$ entries in the product is the probability of zero runs of length~$k$, the sum of the next $(k+2)$ entries is the probability of exactly one run of length~$k$, and so one. From these probabilities we can then calculate the expected number of runs of length~$k$. These expected values are reported in Figure~\ref{FetalLambFMCIFig}c.  
\subsubsection{Longest run length $L$}
The transition matrix for the longest run length~$L$ is visualized in~Figure~\ref{LFMCIFig}. 
\begin{figure}[htb!]
  \centering
  \includegraphics[scale=0.42]{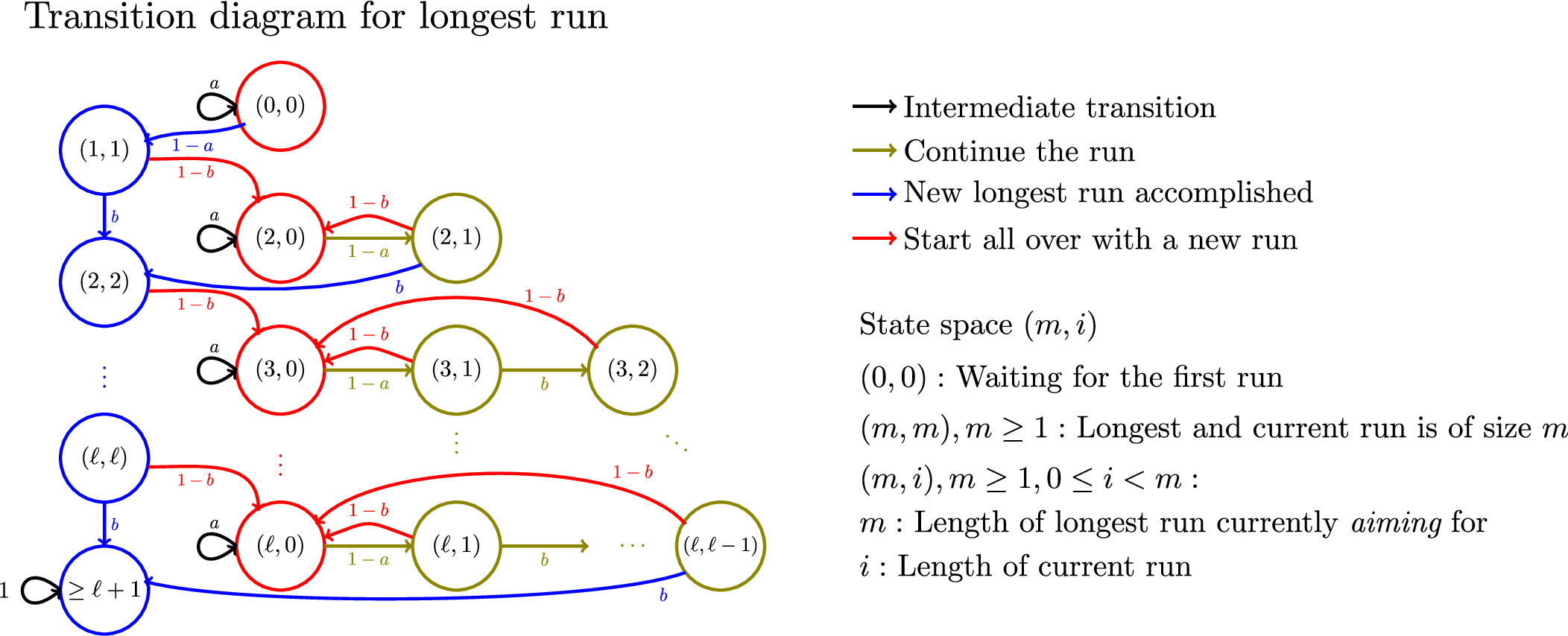}
  \caption{
  Illustration of finite Markov chain imbedding for longest run length $L$. The parameters $a$ and $b$ are the probabilities of staying in state~1 and state~2, respectively, at time~$t$, i.e. $a_t=\Prob(y_t=1|y_{t-1}=1,x)$ and $b_t=\Prob(y_t=2|y_{t-1}=2,x),\;\; t=2,\ldots,n$. The index for the time~$t$ is suppressed in the figure.
  } 
  \label{LFMCIFig}
\end{figure}

We again observe a clear block structure where the blocks increase in size with the length of the run. We define the $(k+2)\times(k+2)$ block matrices $B_k(a,b)$ for $k \geq 1$ by
\begin{eqnarray*}
  B_k(a,b)=
  \begin{tabular}{c||cc|ccccc}
    & 1 & 2 & 3 & $\cdots$ & $k+2$ \\ \hline \hline
    1 & $1-b$ & & \\ 
    2 & $a$ & $1-a$ \\  \hline
    3 & $1-b$ && $b$ \\   
    $\vdots$ &&&& $\ddots$ \\ 
    $k+2$ & $1-b$& &&& $b$ 
  \end{tabular}.
\end{eqnarray*}

The transition matrix is then given by 
\begin{eqnarray*}
  \Lambda=\Lambda(a,b)=
  \begin{pmatrix}
    \begin{tabular}{ccccccccccccc}
      $a$ & \multicolumn{1}{c|}{$1-a$} \\ \cline{1-3}
      \multicolumn{2}{c|}{} & \multicolumn{1}{c|}{$B_1$} & \\ \cline{3-4}
      &&& \multicolumn{1}{|c|}{$B_2$} \\ \cline{4-5} 
      &&& & \multicolumn{1}{|c|}{$\ddots$} \\ \cline{5-7}
      &&& && \multicolumn{2}{|c}{$B_{\ell}$} \\ \cline{6-7}
      &&& &&& \multicolumn{1}{c}{1} 
    \end{tabular}
  \end{pmatrix},
\end{eqnarray*}
and the initial distribution is given by 
\begin{eqnarray*}
  \eta=(\eta_1,\eta_2,0,\ldots,0).
\end{eqnarray*}

For the fetal lamb movement data, the distribution of the longest run is shown in Figure~\ref{FetalLambFMCIFig}d.
\section{Hybrid decoding} \label{Sec:HybridDecoding}
We begin this section by providing an alternative derivation of the hybrid method using a weighted geometric mean approach (Section~\ref{sec:geometric-mean}). We then suggest a novel procedure, based on the Artemis plot, for choosing the tuning parameter~$\alpha$ (Section~\ref{sec:choosingAlpha}). The last two investigations in this section on hybrid decoding are concerned with robustness of Artemis analysis across HMMs (Section~\ref{sec:Robustness}) and blockwise accuracy for hybrid decoding (Section~\ref{sec:Blockwise}). We find that Artemis plots have the desired bow-shape for a number of different HMMs, and we find in general that the hybrid method has higher accuracy than Posterior decoding and Viterbi in terms of decoding blocks of hidden states of intermediate sizes.        
\subsection{Hybrid risk as a weighted geometric mean} \label{sec:geometric-mean}
Recall from (\ref{PosteriorDecoding}) that posterior decoding minimizes the total number of expected pointwise errors by estimating which state is most likely for each position along the sequence 
\begin{eqnarray*}
  {\rm Posterior}\;{\rm decoding:} \;\; s_t=\argmax_{u_t} \Prob(y_t=u_t|x), \;\; t=1,\ldots,n.
\end{eqnarray*}
However, Posterior decoding does not take into account the overall conditional probability of the sequence $\Prob(s|x)$. On the other hand, Viterbi decoding (recall (\ref{ViterbiDecoding})) maximises the conditional probability of the sequence 
\begin{eqnarray*}
  {\rm Viterbi}\; {\rm decoding:} \;\; s=\argmax_u \Prob(y=u|x),
\end{eqnarray*}
which can result in a low pointwise accuracy. Hence, the strength of Posterior decoding is the limitation of Viterbi and vice versa; recall the Artemis plot in Figure~\ref{fig:BasicArtemis}. A risk-based framework for combining Posterior decoding and Viterbi into a common framework was introduced in \cite{LemberKolydenko2014} and \cite{KuljusLember2023}, and we provide a brief review of their method in the Appendix. 

Posterior decoding maximizes the geometric mean
\begin{eqnarray*}
  G(u)=\Big( \prod_{t=1}^n \Prob(y_t = u_t|x) \Big)^{1/n}
\end{eqnarray*}
with respect to the estimated hidden state sequence $u=(u_1,\ldots,u_n)$.
Let $V(u)=\Prob(y=u|x)$ be the probability of the hidden state sequence conditional on the data. Then Viterbi decoding maximizes~$V(u)$. 

Now consider the weighted geometric mean of Posterior and Viterbi decoding
\begin{eqnarray*}
  H_{\alpha}(u)=
  \Big( 
  G(u)^{w_1}V(u)^{w_2} 
  \Big)^{1/(w_1+w_2)},
\end{eqnarray*}
where $w_1=n(1-\alpha)$ and $w_2=n\alpha$ are the weights for the two means. 
Maximization of the weighted geometric mean is identical to the definition of hybrid decoding in~(\ref{HybridDecoding}) because 
\begin{eqnarray}
  w_1 \log G(u)+w_2 \log V(u)=
  (1-\alpha) \Big\{ \; \sum_{t=1}^{n} \log \Prob(y_t=u_t|x) \; \Big\}+
  \alpha \log \Prob(y=u|x).
  \label{HybridGeometricMean}
\end{eqnarray}
We finally note that maximization of the weighted mean~(\ref{HybridGeometricMean}) with respect to the hidden state sequence~$u$ is identical to minimizing the hybrid risk defined in (\ref{eq:hybrid-risk}).
\subsection{Artemis analysis and choice of tuning parameter $\alpha$} \label{sec:choosingAlpha}
To apply the hybrid method in practice, an appropriate value of the tuning parameter $\alpha \in [0,1]$ is required. In this section, we describe our procedure for choosing $\alpha$ based on the pointwise accuracy and joint probability of the hybrid path and observed sequence, which we refer to as Artemis analysis. The idea is simple but powerful: posterior decoding maximizes the pointwise accuracy, whereas Viterbi maximizes the joint probability, and hence it is natural to choose a hybrid path that borrows strength from both of these two desirable properties of the decoded hidden state sequence.

The Artemis analysis is a simulation-based procedure because in order to compute the accuracy we need to know the true hidden state sequence. The simulations are from the fitted HMM. In this section we consider a 3-state Poisson HMM with parameters
\begin{equation}
    \pi = (0.8, 0.1, 0.1), \;\;
    \Gamma =
    \begin{pmatrix}
        0.8 & 0.1 & 0.1 \\
        0.1 & 0.8 & 0.1 \\
        0.1 & 0.1 & 0.8
    \end{pmatrix}, 
    \;\; {\rm and} \;\;
    \lambda = (20-a, 20, 20+a).
    \label{Cases3statePoisson}
\end{equation}
The emissions are divided into three cases of increasing decoding difficulty: easy ($a = 10$), medium ($a = 5$), and hard ($a = 2$). Denote the simulated hidden Markov chain by $y=(y_1,\ldots,y_n)$ and the simulated observed sequence by $x=(x_1,\ldots,x_n)$. For each simulation~$(x,y)$ and selected values of $\alpha$, the hybrid path~$s=s(x,\alpha)$ is inferred, and the pointwise accuracy $\sum_{i=1}^n \mathbbm{1}(s_t=y_t)/n$ and log-joint probability $\log\Prob(y=s,x)$ is calculated. We calculate the log joint probability, since the conditional path probability $\Prob(s|x)$ (which Viterbi maximises), can be rewritten as $\Prob(s|x) = \Prob(s,x)/\Prob(x)$. Since $\Prob(x)$ is the likelihood of data, it is a constant, and hence maximising the conditional path probability $\Prob(s|x)$ is equivalent to maximising the joint probability $\Prob(s,x)$. The joint values of pointwise accuracy and log-joint probability are mapped in the Artemis plot as shown in Figure~\ref{fig:artemis_i_iii}. The optimal $\alpha$ for each simulation is identified as the $\alpha$ on the curve at an angle of 45 degrees (recall Figure~\ref{fig:BasicArtemis}). 
\begin{figure}[!htb]
    \centering
    \includegraphics[width=0.95\linewidth]{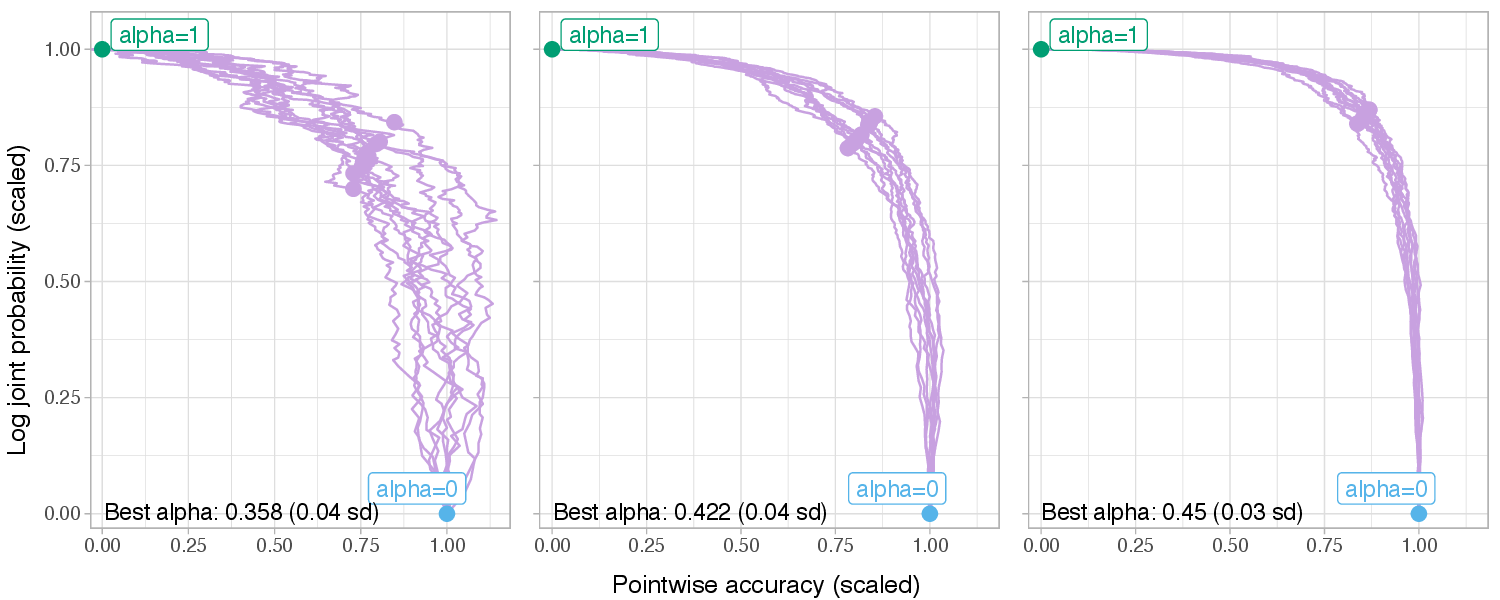}
    \caption{Artemis plots for the three cases of increasing decoding difficulty determined by the emission rates. The rates are given by $\lambda = (20-a, 20, 20+a)$, where $a = 10$ for the easy case, $a = 5$ for the medium case, and $a = 2$ for the hard case. The axes are scaled using the minimum and maximum value for each axis. Each line represents a simulated observed and hidden state sequence (10 for each case) and the sequence length is $n = 10^5$. The optimal $\alpha$ is the average of the 10 simulations for each case. We get an optimal value of 0.358, 0.422, and 0.45 for the 3-state Poisson models considered here.}
    \label{fig:artemis_i_iii}
\end{figure}

In Figure~\ref{fig:artemis_i_iii} we have used 10 simulations for each of the three cases and the simulated sequences are of length $n=10^5$ (we discuss the choice of simulated sequence length below).  For each case, the simulations follow the same pattern. For $\alpha = 0$ (Posterior decoding), the paths have a high pointwise accuracy and a low conditional path probability, and when we increase $\alpha$, we observe a distinct bow-shape, until we reach $\alpha = 1$ (Viterbi), where the paths have a low pointwise accuracy, but a high conditional path probability. Hence, choosing $\alpha$ at a 45 degree angle will result in a hybrid path that balances in a suitable way both the pointwise accuracy and the conditional path probability.

In Table~\ref{tab:distribution-alphas} we provide the values of the optimal~$\alpha$ for each of the simulations for each of the three cases. We recommend to use the average $\alpha$ from the simulations as the value of the tuning parameter~$\alpha$ in hybrid decoding. 
\begin{table}[!htb]
    \centering
    \begin{tabular}{c|ccc}
        \textbf{Iteration} & \textbf{Easy ($a=10$)} & \textbf{Medium ($a=5$)} & \textbf{Hard ($a=2$)} \\
        \hline
        1 & 0.371 & 0.449 & 0.461 \\
        2 & 0.344 & 0.480 & 0.453 \\
        3 & 0.414 & 0.395 & 0.430 \\
        4 & 0.289 & 0.391 & 0.461 \\
        5 & 0.336 & 0.414 & 0.410 \\
        6 & 0.340 & 0.480 & 0.461 \\
        7 & 0.383 & 0.418 & 0.477 \\
        8 & 0.312 & 0.363 & 0.480 \\
        9 & 0.438 & 0.383 & 0.457 \\
        10 & 0.352 & 0.449 & 0.406 \\ \hline 
        average & 0.358 & 0.422 & 0.450 \\
        std dev & 0.045 & 0.041 & 0.025
    \end{tabular}
    \caption{The values of optimal $\alpha$ for each of the 10 simulations for the three cases in Figure~\ref{fig:artemis_i_iii}. We recommend to use the average $\alpha$ from the simulations as the final value of the tuning parameter $\alpha$ in hybrid decoding.}
    \label{tab:distribution-alphas}
\end{table}
\subsubsection{Choice of simulated sequence length in Artemis plot}
Figure~\ref{fig:size-of-n} shows Artemis plots of four simulations with different values of sequence length $n$ from the medium case with $a = 5$. Choosing $\alpha$ based on simulations with $n = 100$ versus simulations with $n = 10^5$ leads to very different results. When $n$ is small, Posterior decoding and Viterbi will most often only differ in a few positions, and only very few different hybrid paths are available, which is the behavior we see in the upper left corner in Figure~\ref{fig:size-of-n}. The situation is similar to the behavior in Figure~\ref{fig:hybrid-earthquakes-simulated} where the sequence length was also around $n=100$. 
\begin{figure}[!htb]
    \centering
    \includegraphics[width=0.65\linewidth]{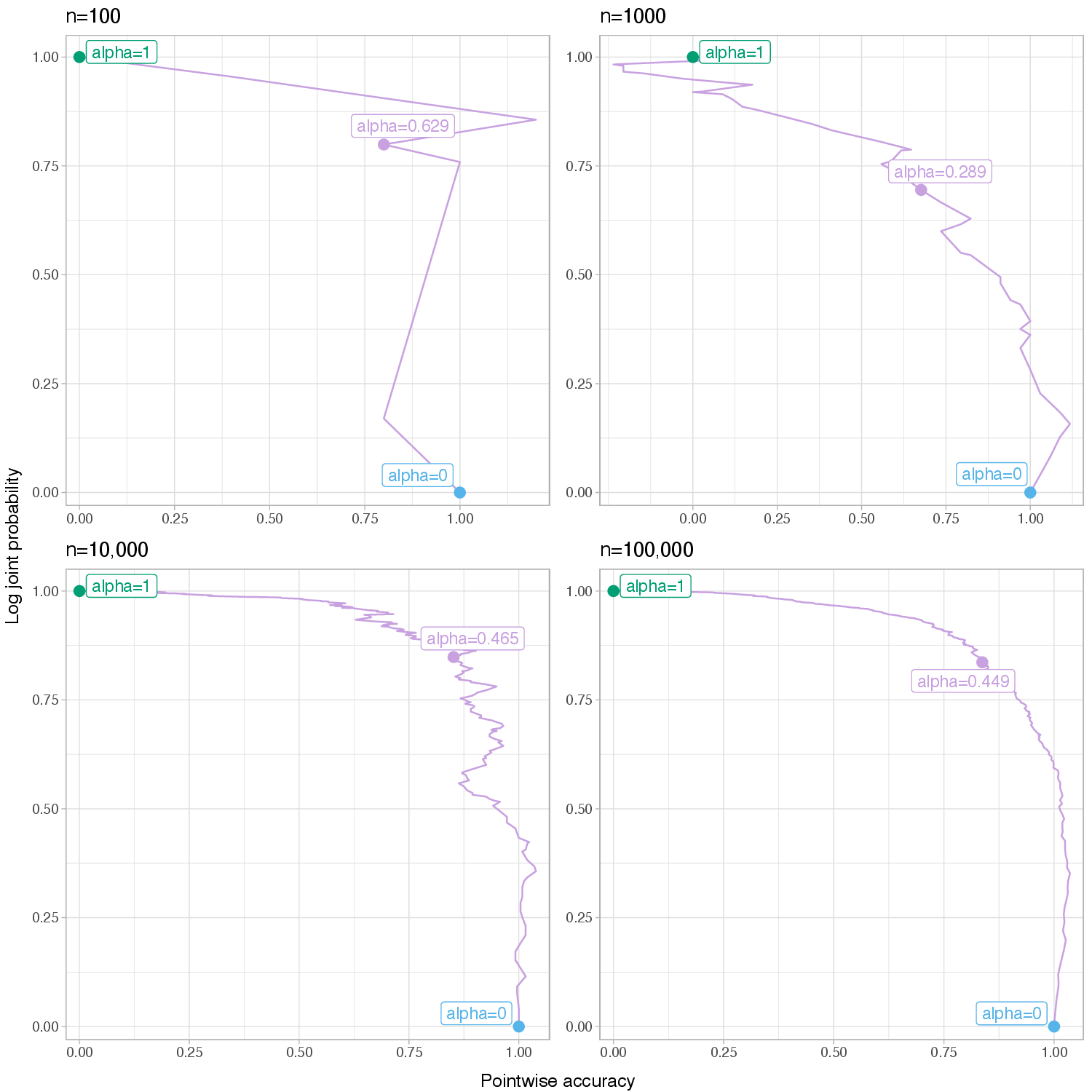}
    \caption{Artemis plots of four simulations with different sequence length~$n$. All simulations are from the medium case   where the Poisson emission rates are given by $\lambda=(15,20,25)$ for the three hidden states.}
    \label{fig:size-of-n}
\end{figure}

In Figure~\ref{fig:artemis_i_iii} the size of the simulated sequences was $n=10^5$. We recommend to choose the length of the simulated sequence length based on visual inspection of the Artemis plots in Figure~\ref{fig:size-of-n}. The Artemis curves for the (scaled) pointwise accuracy and log-joint probability stabilize for increasing values of sequence length. For the medium case with $n=10^5$ the Artemis profile is sufficiently smooth to produce a reliable value of the optimal~$\alpha$. 
\subsection{Robustness of Artemis analyses across HMMs} \label{sec:Robustness}
In this section we investigate the shape of the Artemis plot and variability in the optimal tuning parameter $\alpha$ across different HMMs. We consider a situation where we vary the transition probability matrix and emission probabilities in a 3-state Poisson HMM. The parameters of the model are given by
\begin{equation}
    \pi = (0.8, 0.1, 0.1), \;\;
    \Gamma =
    \begin{pmatrix}
        q & \frac{1 - q}{2} & \frac{1 - q}{2} \\
        \frac{1 - q}{2} & q & \frac{1 - q}{2} \\
        \frac{1 - q}{2} & \frac{1 - q}{2} & q
    \end{pmatrix} 
    \;\; {\rm and} \;\;
    \lambda = (20-a, 20, 20+a),
    \label{TheNineCases}
\end{equation}
and we consider $3\times3=9$ different models with and $q=(0.8,0.5,0.1)$ and $a=(10,5,2)$. 

The Artemis plots for the nine different models are shown in Figure~\ref{fig:change-model}. We observe that the optimal~$\alpha$ depends on the model parameters (the optimal $\alpha$ ranges from 0.123 to 0.45), which shows that there is no single optimal~$\alpha$ that can used for all HMM decoding problems. Thus, it is important to estimate a unique tuning parameter~$\alpha$ for each data set. 
\begin{figure}[!htb]
    \centering
    \includegraphics[width=0.65\linewidth]{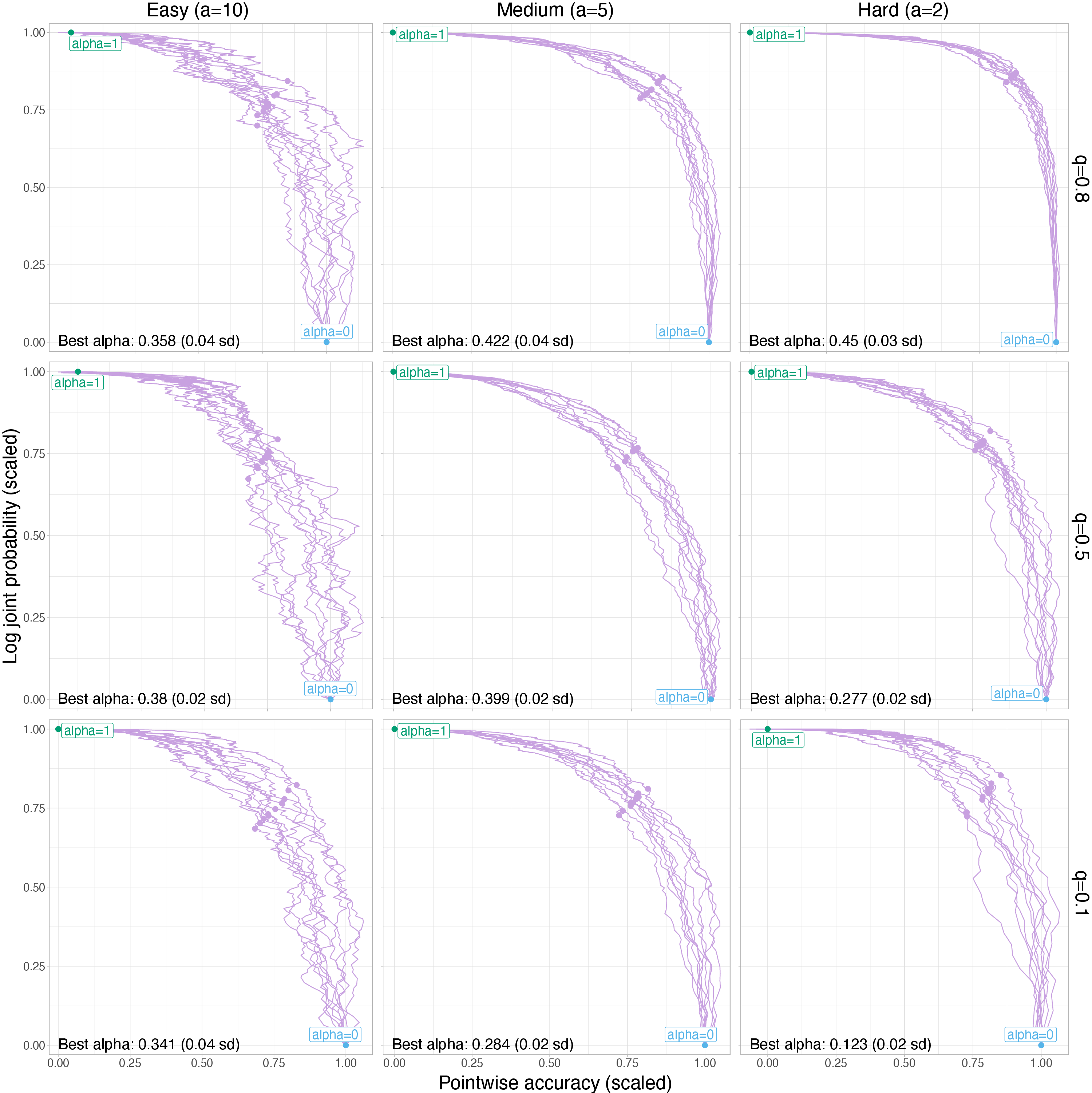}
    \caption{Artemis plots for nine cases of different models. The columns are changes in emission rates, and the rows are changes in transition probabilities, specifically the probability of staying in a state.}
    \label{fig:change-model}
\end{figure}

Figure~\ref{fig:change-model} also shows that when considering each case, the smaller the probability of staying in a state is (the smaller the parameter $q$ is), the closer the optimal $\alpha$ is to Posterior decoding ($\alpha = 0$). This property intuitively makes sense because Posterior decoding is more inclined to produce frequent transitions between states than Viterbi, hence a hybrid path optimizing the pointwise accuracy will be more similar to the path obtained by Posterior decoding than to the path obtained by Viterbi (and vice-versa).
\subsection{Block-wise accuracy for hybrid decoding} \label{sec:Blockwise}
As described in \cite{LemberKolydenko2014} and \cite{KuljusLember2023}, the idea of the hybrid method is to consider larger parts of the hidden state sequence when decoding in order to avoid potential inadmissible paths in posterior decoding. In this section we investigate the block-wise accuracy of hybrid decoding.

Let us return to the situation in (\ref{Cases3statePoisson}) with the easy ($a = 10$), medium ($a = 5$) and hard ($a = 2$) cases of decoding difficulty. In Figure~\ref{fig:block1} we show on the x-axis the block-size. Blocks of size one corresponds to considering each position in the sequence, blocks of size two any two pairs in the sequence, and so on. On the y-axis we show the ability to decode the block. For blocks of size one this is just pointwise accuracy. We see in general that posterior decoding has a higher accuracy for pointwise accuracy (blocks of size one), but for blocks of size two or larger the accuracy of posterior decoding is always lower than hybrid decoding. Actually, we see that hybrid decoding outperforms both Posterior decoding and Viterbi for small blocks larger than or equal to two.
\begin{figure}[!htb]
    \centering
    \includegraphics[width=0.65\linewidth]{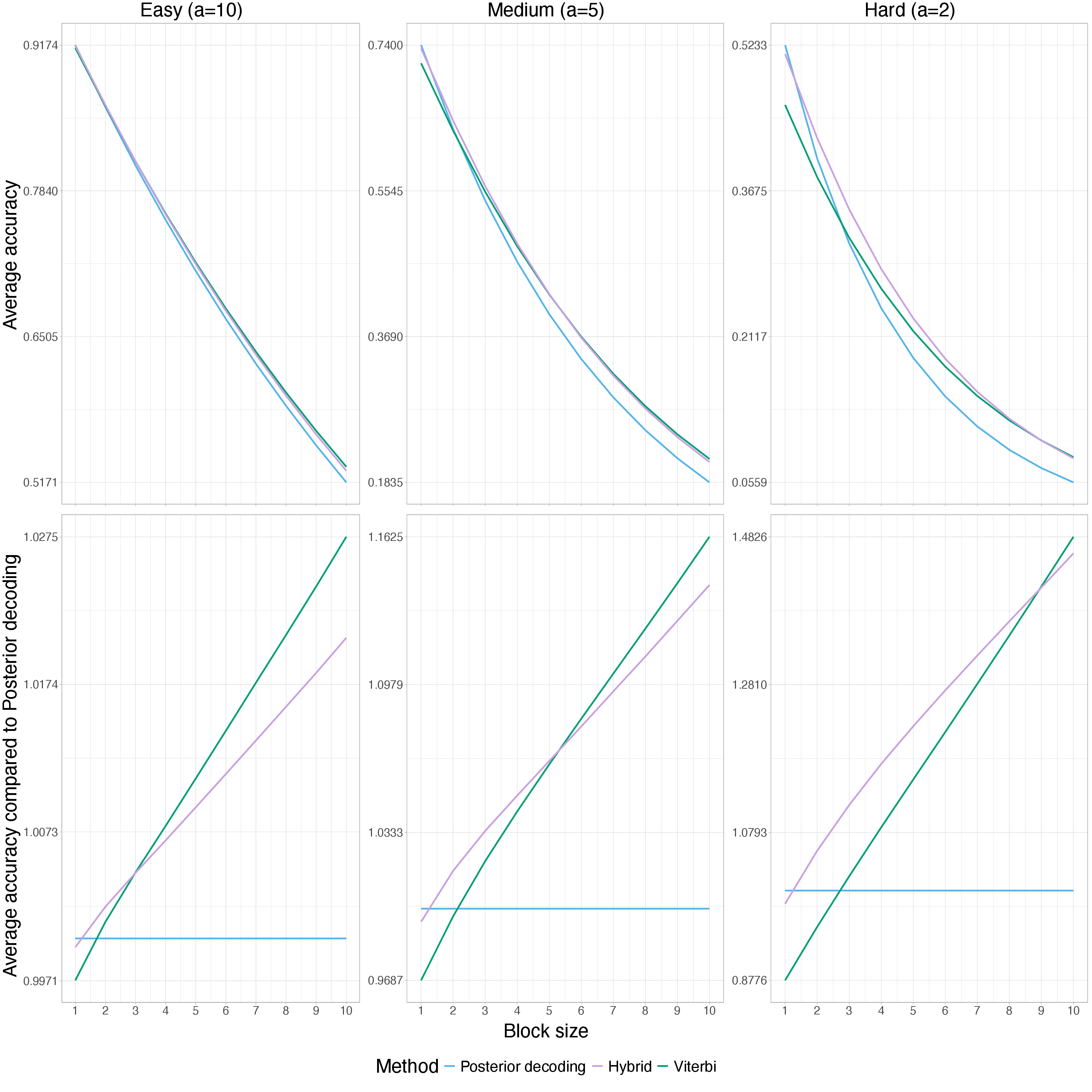}
    \caption{An illustration of the gain of using hybrid on intermediate segments. In the top row, we consider the average accuracy, based on 10 simulations for each block size, for Posterior decoding, hybrid, and Viterbi for three cases of increasing decoding difficulty, as also considered in Figure \ref{fig:artemis_i_iii}. In the bottom row, we consider the average accuracy compared to Posterior decoding instead, to better see the difference between the three methods. The $\alpha$s used for hybrid are the optimal $\alpha$s found using the Artemis plots in Figure \ref{fig:artemis_i_iii}.}
    \label{fig:block1}
\end{figure}

The block-wise accuracies of the three decoding methods on the nine situations from (\ref{TheNineCases}) for increasing block sizes are visualized in Figure~\ref{fig:blocks}. The results are in line with intuition: in terms of accuracy, Posterior decoding is superior when considering pointwise accuracy (blocks of size 1), whereas hybrid outperforms both Posterior decoding and Viterbi for intermediate block sizes, and finally Viterbi is superior for very large block sizes. These results shows exactly how hybrid is an intermediary decoding method between Posterior decoding and Viterbi.
\begin{figure}[!htb]
    \centering
    \includegraphics[width=0.65\linewidth]{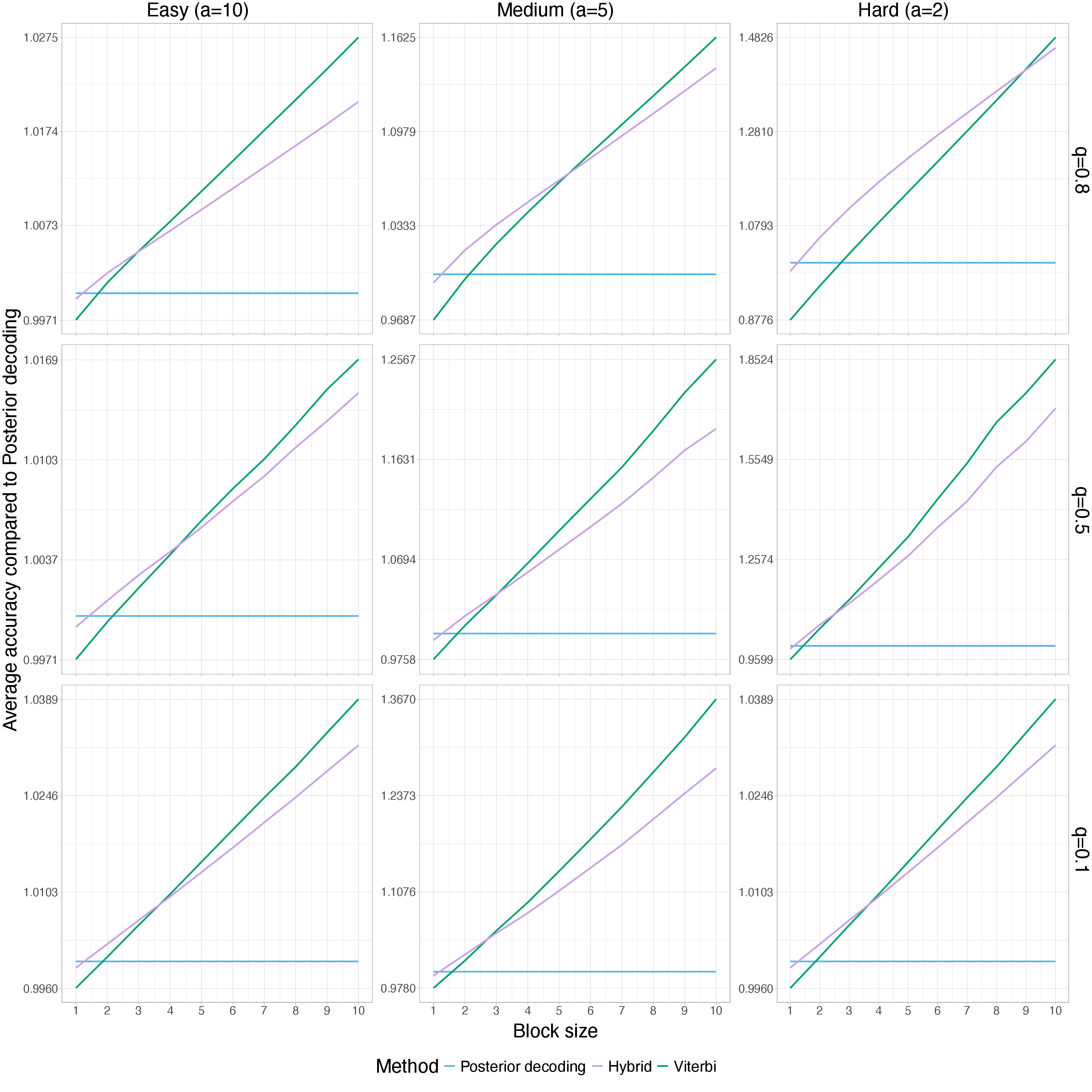}
    \caption{The average accuracy (based on 10 simulations for each block size) compared to Posterior decoding for hybrid and Viterbi for increasing block sizes in six different cases. The columns are changes in emission rates, and the rows are changes in transition probabilities, specifically the probability of staying in a state. The top row is also illustrated in Figure \ref{fig:block1}. The optimal $\alpha$s used for hybrid are found in Figure \ref{fig:change-model}.}
    \label{fig:blocks}
\end{figure}
\section{Discussion} \label{Sec:Discussion}
The fetal lamb movement data discussed in Section~\ref{sec:FMCImotivate} is a rather short sequence of length~225 observations. A number of adaptations are required in order to apply FMCI for long sequences. We describe these adaptations in \cite{CollMacia2025} where we perform a genome-wide FMCI analysis of a 2-state Poisson HMM that identifies regions of archaic introgression into modern humans.

Our descriptions and applications of FMCI is for HMMs with two hidden states and for very simple pattern distributions. We want to emphasize that FMCI can be extended to multiple states and complex patterns, and this extension is indeed available in \cite{AstonMartin2007}. \cite{martin2020distributions} provide an interesting methodology for computing distributions of complicated patterns in categorical time series for very flexible and general models. 

We derived the hybrid risk formulation of \cite{LemberKolydenko2014} using a weighted geometric mean approach. We emphasize that the hybrid risk formulation is in fact a very general framework. Indeed, \cite{LemberKolydenko2014} and \cite{KuljusLember2023} explain how the decision-theoretic approach for segmental classification developed in \cite{yau2013decision} is in fact a special case of the hybrid risk framework. 
\section*{Acknowledgements}
We are indebted to Morten Ravn and Frederik Ligaard Sørensen for initial results on finite Markov chain imbedding.
We are grateful to J\"{u}ri Lember and Kristi Kuljus for an introduction to hybrid segmentation, sharing code for calculating the hybrid path, hosting Zenia Elise Damgaard Bæk during a visit to Tartu (supported by a travel grant from the Danish Data Science Academy), arranging the Markov model workshop in Tartu in the autumn 2024, and valuable discussions. We thank the Novo Nordisk Foundation for funding.
\bibliography{references}

\begin{thebibliography}{}

\bibitem[Aston and Martin, 2007]{AstonMartin2007}
Aston, J. A.~D. and Martin, D. E.~K. (2007).
\newblock {Distributions associated with general runs and patterns in hidden Markov models}.
\newblock {\em The Annals of Applied Statistics}, 1(2):585 -- 611.

\bibitem[Burge and Karlin, 1997]{burge1997prediction}
Burge, C. and Karlin, S. (1997).
\newblock Prediction of complete gene structures in human genomic {DNA}.
\newblock {\em Journal of molecular biology}, 268(1):78--94.

\bibitem[Coll~Macià et~al., 2025]{CollMacia2025}
Coll~Macià, M., Skov, L., Bæk, Z. E.~D., and Hobolth, A. (2025).
\newblock Enhancement of hidden {M}arkov model analyses for improved inference of archaic introgression in modern humans.
\newblock {\em Available on bioRxiv}.

\bibitem[Durbin et~al., 1998]{durbin1998biological}
Durbin, R., Eddy, S.~R., Krogh, A., and Mitchison, G. (1998).
\newblock {\em Biological sequence analysis: probabilistic models of proteins and nucleic acids}.
\newblock Cambridge university press.

\bibitem[Dutheil, 2017]{dutheil2017hidden}
Dutheil, J.~Y. (2017).
\newblock Hidden {M}arkov models in population genomics.
\newblock In {\em Hidden Markov models: Methods and protocols}, pages 149--164. Springer.

\bibitem[Fu and Koutras, 1994]{FuKoutras1994}
Fu, J.~C. and Koutras, M.~V. (1994).
\newblock Distribution theory of runs: A {M}arkov chain approach.
\newblock {\em Journal of the American Statistical Association}, 89(427):1050--1058.

\bibitem[Guttorp, 2018]{guttorp2018stochastic}
Guttorp, P. (2018).
\newblock {\em Stochastic modeling of scientific data}.
\newblock Chapman and Hall/CRC.

\bibitem[Hobolth and Jensen, 2005]{hobolth2005applications}
Hobolth, A. and Jensen, J.~L. (2005).
\newblock Applications of hidden {M}arkov models for characterization of homologous {DNA} sequences with a common gene.
\newblock {\em Journal of Computational Biology}, 12(2):186--203.

\bibitem[Kuljus and Lember, 2023]{KuljusLember2023}
Kuljus, K. and Lember, J. (2023).
\newblock Pairwise {M}arkov models and hybrid segmentation approach.
\newblock {\em Methodology and Computing in Applied Probability}, pages 25 (2), 67.

\bibitem[Lember and Koloydenko, 2014]{LemberKolydenko2014}
Lember, J. and Koloydenko, A.~A. (2014).
\newblock Bridging {V}iterbi and {P}osterior decoding: A generalized risk approach to hidden path inference based on hidden {M}arkov models.
\newblock {\em Journal of Machine Learning Research}, 15(1):1--58.

\bibitem[Leroux and Puterman, 1992]{LerouxPuterman1992}
Leroux, B.~G. and Puterman, M.~L. (1992).
\newblock Maximum-penalized-likelihood estimation for independent and {M}arkov-dependent mixture models.
\newblock {\em Biometrics}, 48(2):545--558.

\bibitem[Lindgren, 1978]{Lindgren1978}
Lindgren, G. (1978).
\newblock Markov regime models for mixed distributions and switching regressions.
\newblock {\em Scandinavian Journal of Statistics}, 5(2):81--91.

\bibitem[Martin, 2020]{martin2020distributions}
Martin, D.~E. (2020).
\newblock Distributions of pattern statistics in sparse {M}arkov models.
\newblock {\em Annals of the Institute of Statistical Mathematics}, 72:895--913.

\bibitem[Skov et~al., 2018]{SkovHuiShchurHobolthScallySchierupDurvin2018}
Skov, L., Hui, R., Shchur, V., Hobolth, A., Scally, A., and Schierup, M. (2018).
\newblock Detecting archaic introgression using an unadmixed outgroup.
\newblock {\em PLOS Genetics}, 14:e1007641.

\bibitem[Turner, 2008]{turner2008direct}
Turner, R. (2008).
\newblock Direct maximization of the likelihood of a hidden {M}arkov model.
\newblock {\em Computational Statistics \& Data Analysis}, 52(9):4147--4160.

\bibitem[Yau and Holmes, 2013]{yau2013decision}
Yau, C. and Holmes, C.~C. (2013).
\newblock A decision-theoretic approach for segmental classification.
\newblock {\em The Annals of Applied Statistics}, pages 1814--1835.

\bibitem[Zucchini et~al., 2016]{Zucchini2016}
Zucchini, W., MacDonald, I.~L., and Langrock, R. (2016).
\newblock {\em Hidden {M}arkov Models for Time Series: An Introduction Using R}.
\newblock CRC Press, 2 edition.

\end{thebibliography}
\renewcommand{\thesection}{A} 
\section{Appendix: Hybrid risk and hybrid decoding}
\subsection{Hybrid risk}
In this section we review the risk-based framework introduced in \cite{LemberKolydenko2014} and \cite{KuljusLember2023}. The aim of decoding is to estimate a hidden state sequence~$s=(s_1,\ldots,s_n)$ from the observed sequence~$x=(x_1,\ldots,x_n)$. Let $L(y,u)$ denote the loss function between a hidden state sequence $u=(u_1,\ldots,u_n)$ and the true (but unknown) hidden state sequence $y=(y_1,\ldots,y_n)$. We use the definition of conditional risk from \cite{KuljusLember2023}:
\begin{definition}
    For any estimated state sequence $u$, the expected loss for a given observed sequence $x$ is defined as the conditional risk
    \begin{equation*}
        R(u|x) = \Exp(L(Y, u)|x) = \sum_{y} L(y,u) \Prob(y|x).
    \end{equation*}
\end{definition}
The estimated state sequence $s$ minimizes the conditional risk, i.e.
\begin{equation*}
    s = \argmin_{u} R(u|x).
\end{equation*}
Posterior decoding and Viterbi decoding are directly related to specific loss functions. We first derive the loss function for posterior decoding and second the loss function for Viterbi decoding.

Consider the pointwise $L_1$ loss function 
\begin{equation}
    L_1(y,u) = \frac{1}{n} \sum_{t=1}^n \mathbbm{1}(y_t \neq u_t),
\end{equation}
where $\mathbbm{1}()$ is the indicator function. The conditional risk for $L_1$ is
\begin{align*}
    R_1(u|x) &= \Exp\Big[L_1(Y,u)|x\Big]=\sum_y L_1(y,u)\Prob(y|x)=
    \frac{1}{n} \sum_{t=1}^n \sum_y \Big( 1-\mathbbm{1}(y_t = u_t) \Big) \Prob(y|x) \\
    &= 1-\frac{1}{n} \sum_{t=1}^n \Prob(y_t = u_t|x).
\end{align*}
We conclude that the state sequence that minimizes the conditional risk $R_1$ is the sequence that maximizes the conditional probability $\Prob(y_t = u_t|x)$ for each $t = 1,\dots,n$, which is equivalent to posterior decoding.

Next consider the global loss function $L_\infty$ defined by
\begin{equation}
    L_\infty(y,u) =
    \begin{cases}
        1, & \text{if } y \neq u, \\
        0, & \text{if } y = u.
    \end{cases}
\end{equation}
The conditional risk for $L_\infty$ is
\begin{align*}
    R_\infty(u|x) &= \mathbb{E}\Big[L_\infty(Y,u)|x\Big] 
    =\sum_y L_\infty(y,u) \Prob(y|x) 
    = 1 + \sum_y \Big[ -\big( 1-L_\infty(y,u) \big) \Prob(y|x) \Big] \\
    &= 1-\Prob(y=u| x).
\end{align*}
We conclude that the state sequence that minimizes the conditional risk $R_\infty$ is the sequence that maximizes the conditional probability $\Prob(y=u|x)$, which is equivalent to Viterbi decoding.

In order to combine the conditional risk functions we define the logarithmic risks $\Bar{R}_1$ and $\Bar{R}_\infty$ as
\begin{align*}
    \Bar{R}_1(u|x) = -\frac{1}{n} \sum_{t=1}^n \log \Prob(y_t = u_t|x),\;\;\; {\rm and} \;\;\;
    \Bar{R}_\infty(u|x) = -\frac{1}{n}\log \Prob(y = u|x).
\end{align*}
The logarithmic risks also allow us to interpret the hybrid risk in terms of blocks, as described in \cite{KuljusLember2023}.

We finally combine posterior decoding and Viterbi in the hybrid conditional risk function  
\begin{eqnarray}
    R_{\alpha}(u|x)=(1-\alpha)\Bar{R}_1(u|x) + \alpha \Bar{R}_\infty(u|x),
    \label{eq:hybrid-risk}
\end{eqnarray}
where $\alpha \in [0,1]$ is a tuning parameter. 
\begin{definition}
    \label{def:hybrid-path}
    The hybrid path $s$ is the sequence that minimizes the hybrid risk:
    \begin{equation}
        s=s(\alpha,x)=\argmin_u\{ R_{\alpha}(u|x) \}.
    \end{equation} 
\end{definition}

\cite{LemberKolydenko2014} prove that the hybrid path is always admissible for $\alpha > 0$. That is, the conditional probability $\Prob(s|x)$ of the path is always positive.
\subsection{Hybrid decoding}
In order to determine the hybrid path for a given $\alpha$, a recursive algorithm similar to Viterbi is used. The following theorem is stated in \cite{KuljusLember2023}, and is formulated again here for completeness.
\begin{theorem}
    The hybrid path can be found by the following recursion and back-tracking. Define the hybrid table $\delta$ by the initial values 
    \begin{align*}
        \delta_1(j) &= \alpha\log \Prob(y_1 = j,x_1) + (1-\alpha)\log\Prob(y_1 = j|x), 
    \end{align*}
    and by the recursion    
    \begin{align*}
        \delta_t(j) &= 
        \max_i \big\{ \delta_{t-1}(i) + \alpha\log\Prob(y_t = j,x_t|y_{t-1} = i) \big\} + (1-\alpha)\log\Prob(y_t = j \mid x),
    \end{align*}
    for all $j$ and $t = 2,\dots,n$. In the second term, the probability $\Prob(y_t = j|x)$ is the conditional state probability calculated from the forward-backward tables and used in Posterior decoding.

    When filling the table $\delta_t(j)$, define the pointers $\psi_t(j)$ as
    \begin{align*}
        \psi_t(j) &= \argmax_{i} \big\{ \delta_t(i) + \alpha\log\Prob(y_{t+1} = j,x_{t+1}|y_t = i) \big\}, \quad t = 1,\dots,n-1, \\
        \psi_n &= \argmax_i \ \delta_n(i).
    \end{align*}
    The path $s$ minimising the hybrid risk is obtained by
    \begin{equation*}
        s_n = \psi_n, \quad s_t = \psi_t(s_{t+1}), \quad t=n-1,\dots,1.
    \end{equation*}
\end{theorem}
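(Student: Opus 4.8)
The plan is to reduce the statement to the standard correctness argument for a Viterbi-type dynamic program, once the hybrid objective has been rewritten as a sum of terms each depending on at most two consecutive hidden states. First I would turn ``minimise the hybrid risk'' into ``maximise an additive objective'': by Definition~\ref{def:hybrid-path} and the definition of $R_\alpha$ we have $-n\,R_\alpha(u|x)=(1-\alpha)\sum_{t=1}^n\log\Prob(y_t=u_t|x)+\alpha\log\Prob(y=u|x)$, so $s$ minimises $R_\alpha(\cdot|x)$ iff it maximises this expression. Writing $\Prob(y=u|x)=\Prob(y=u,x)/\Prob(x)$ and noting that $\Prob(x)$ is a positive constant independent of $u$, the maximisation is equivalent to maximising $h(u):=(1-\alpha)\sum_{t=1}^n\log\Prob(y_t=u_t|x)+\alpha\log\Prob(y=u,x)$, which is exactly the hybrid loss function~(\ref{HybridDecoding}) with the joint probability substituted for the conditional one, as already discussed.

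Second I would use the HMM factorisation $\Prob(y=u,x)=\Prob(y_1=u_1,x_1)\prod_{t=2}^n\Prob(y_t=u_t,x_t\mid y_{t-1}=u_{t-1})$, which follows from the chain rule together with the defining properties of the HMM (each emission $x_t$ depends on the hidden chain only through $y_t$, and $(y_t)$ is Markov, so $\Prob(y_t=j,x_t\mid y_{t-1}=i)=\Gamma_{ij}\Phi(x_t\mid j)$ and $\Prob(y_1=j,x_1)=\pi_j\Phi(x_1\mid j)$). Taking logarithms and grouping terms by time index yields $h(u)=g_1(u_1)+\sum_{t=2}^n g_t(u_{t-1},u_t)$, where $g_1(j)=\alpha\log\Prob(y_1=j,x_1)+(1-\alpha)\log\Prob(y_1=j|x)$ and $g_t(i,j)=\alpha\log\Prob(y_t=j,x_t\mid y_{t-1}=i)+(1-\alpha)\log\Prob(y_t=j|x)$ for $t\ge 2$. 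The essential observation is that the posterior weight $(1-\alpha)\log\Prob(y_t=u_t|x)$ is unary in $u_t$ alone, hence it can be absorbed into the edge cost $g_t(\cdot,u_t)$ without introducing any dependence beyond consecutive pairs; this is precisely the structure the Viterbi recursion is built for.

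Third, I would invoke the principle of optimality: define $\delta_t(j)$ as the maximum of $g_1(u_1)+\sum_{s=2}^t g_s(u_{s-1},u_s)$ over prefixes $(u_1,\dots,u_t)$ with $u_t=j$, so that $\delta_1(j)=g_1(j)$ and, by a short induction on $t$, $\delta_t(j)=\max_i\{\delta_{t-1}(i)+g_t(i,j)\}$; pulling the $i$-independent summand $(1-\alpha)\log\Prob(y_t=j|x)$ out of the maximum reproduces the stated recursion, and $\max_u h(u)=\max_j\delta_n(j)$. The pointers then reconstruct an optimiser: $\psi_n$ records an optimal terminal state, $\psi_t(j)$ records a predecessor $i$ realising the maximum in the transition into $(t+1,j)$ (the omitted term $(1-\alpha)\log\Prob(y_{t+1}=j|x)$ does not affect the $\argmax$ over $i$), and hence $s_n=\psi_n$, $s_t=\psi_t(s_{t+1})$ attains $\max_u h(u)$, i.e., minimises $R_\alpha(\cdot|x)$. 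I do not anticipate a genuine obstacle here; the care needed is bookkeeping — writing the HMM factorisation cleanly, respecting the indexing convention (the paper attaches $\Prob(y_t=u_t|x)$ to edge $t$ and $\Prob(y_1=u_1|x)$ to $\delta_1$), and noting that for $\alpha>0$ an inadmissible path $u$ has $\log\Prob(y=u,x)=-\infty$, so the recursion, carried out over the possibly $-\infty$ log-values, automatically discards such paths (recovering the admissibility result of \cite{LemberKolydenko2014}), while at $\alpha=0$ it collapses to Posterior decoding as it should.
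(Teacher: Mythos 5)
Your proposal is correct and follows essentially the same route as the paper's proof: converting the risk minimisation to maximisation of the hybrid objective with the joint probability $\Prob(y=u,x)$, factorising it over consecutive hidden-state pairs, and applying a Viterbi-style dynamic programming argument with backtracking. Your treatment is, if anything, slightly more explicit than the paper's about the principle-of-optimality induction and the correctness of the pointer reconstruction, but the underlying decomposition and argument are the same.
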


\begin{proof}
    By Definition \ref{def:hybrid-path}, the aim is to find the state path $s$ minimising the hybrid risk (\ref{eq:hybrid-risk}). We start by noticing that the minimization of the hybrid risk is equivalent to the maximization
    \begin{equation*}
        \max_s \Big\{ \alpha\log\Prob(y = s \mid x) + (1-\alpha) \sum_{t=1}^n \log\Prob(y_t = s_t \mid x) \Big\}.
    \end{equation*}
    The idea is to rewrite the above maximisation such that it is similar to the entries in the hybrid table~$\delta_t(j)$ for $t = 1,\dots,n$. We start by considering
    \begin{align*}
        \MoveEqLeft[6] \max_s \Big\{ \alpha\log\Prob(y = s \mid x) + (1-\alpha)\sum_{t=1}^n \log\Prob(y_t = s_t \mid x) \Big\} \\
        &= \max_s \big\{ \alpha\log\Prob(y = s,x) + (1-\alpha)\sum_{t=1}^n \log\Prob(y_t = s_t \mid x) \big\} \\
        &= \max_s \big\{ \alpha\log\Prob(y_1 = s_1,x_1) + \alpha\sum_{t=2}^n \log\Prob(y_t = s_t,x_t \mid y_{t-1} = s_{t-1}) \\
        &\qquad\qquad + (1-\alpha)\sum_{t=1}^n \log\Prob(y_t = s_t \mid x) \big\} \\
        &= \max_s \big\{ \alpha\log\Prob(y_1 = s_1,x_1) + (1-\alpha)\log\Prob(y_1 = s_1 \mid x) \\
        &~~~~~~~~~~~ + \sum_{t=2}^n \big[ \alpha\log\Prob(y_t = s_t,x_t \mid y_{t-1} = s_{t-1}) + (1-\alpha)\log\Prob(y_t = s_t \mid x) \big] \big\} \\
        &= \max_s \big\{ \delta_1(s_1) + \sum_{t=2}^n \big[ \alpha\log\Prob(y_t = s_t,x_t \mid y_{t-1} = s_{t-1}) + (1-\alpha)\log\Prob(y_t = s_t \mid x) \big] \big\}.
    \end{align*}
    If we consider an arbitrary $t$, note that the above is equal to
    \begin{align*}
        \MoveEqLeft[6] \max_{s_{1:t}} \big\{ \delta_1(s_1) + \sum_{i=2}^t \big[ \alpha\log\Prob(y_i = s_i,x_i \mid y_{i-1} = s_{i-1}) + (1-\alpha)\log\Prob(y_i = s_i \mid x) \big] \big\} \\
        &= \max_j \big\{ \max_{s_{1:(t-1)}, s_t = j} \big\{ \delta_1(s_1) + \sum_{i=2}^{t-1} \big[ \alpha\log\Prob(y_i = s_i,x_i \mid y_{i-1} = s_{i-1}) \\
        &\qquad\qquad\qquad\qquad\qquad\qquad\quad\qquad + (1-\alpha)\log\Prob(y_i = s_i \mid x) \big] \\
        &~~~~~~~~~~~~~~~~~~~~~~~~~~ + \alpha\log\Prob(y_t = j,x_t \mid y_{t-1} = s_{t-1}) + (1-\alpha)\log\Prob(y_t = j \mid x) \big\} \big\} \\
        &= \max_j \ \delta_t(j).
    \end{align*}
    Thus, we conclude that we solve the minimisation of the hybrid risk (\ref{eq:hybrid-risk}) by solving the maximisation
    \begin{equation*}
        \max_j \ \delta_n(j).
    \end{equation*}
    This is similar to the recursion used when calculating the Viterbi path, and thus the state path minimising the hybrid risk is obtained by backtracking from $\psi_n$.
\end{proof}

Since the theorem is applicable for pairwise Markov models, it can be simplified in the case of HMMs. That is, the scores $\delta_t$ can be rewritten as
\begin{align*}
    \delta_1(j) &= \alpha \log(\Phi(x_1|j) \pi_j) + (1-\alpha) \log \Big( \frac{\beta_t(j)\alpha_t(j)}{\Prob(x)} \Big), \\
    \delta_t(j) &= \max_i \big\{ \delta_{t-1}(i) + \alpha \log (\Phi(x_t|j)\Gamma_{ij}) \big\} + (1-\alpha) \log \Big( \frac{\beta_t(j)\alpha_t(j)}{\Prob(x)} \Big),
\end{align*}
where $\alpha_t(j) = \Prob (y_t=j, x_1, \dots,x_t)$ is the table of forward probabilities and $\beta_t(j)=\Prob(x_{t+1},\ldots,x_n|y_t=j)$ is the table of backward probabilities. 
\end{document}